\theoremstyle{plain}
\newtheorem{prop}{Proposition}
\newcommand{\bfb}{\mathbf{b}}
\newcommand{\bfc}{\mathbf{c}}
\newcommand{\bff}{\mathbf{f}}
\newcommand{\bfG}{\mathbf{G}}
\newcommand{\bfp}{\mathbf{p}}
\newcommand{\bfw}{\mathbf{w}}
\newcommand{\bfW}{\mathbf{W}}
\newcommand{\bfx}{\mathbf{x}}
\newcommand{\minx}{x_{\textrm{min}}}
\newcommand{\maxx}{x_{\textrm{max}}}
\newcommand{\miny}{y_{\textrm{min}}}
\newcommand{\maxy}{y_{\textrm{max}}}
\newcommand{\minz}{z_{\textrm{min}}}
\newcommand{\maxz}{z_{\textrm{max}}}
\newcommand{\paraheading}[1]{\textbf{{#1}.~}}
\newcommand{\Rnum}[1]{\uppercase\expandafter{\romannumeral #1\relax}}
\newcommand{\rnum}[1]{\lowercase\expandafter{\romannumeral #1\relax}}
\newcommand{\trainsetone}{\textsc{Training-Set-\Rnum{1}}}
\newcommand{\trainsettwo}{\textsc{Training-Set-\Rnum{2}}}
\newcommand{\modela}{\textsc{Model-A}}
\newcommand{\modelb}{\textsc{Model-B}}
\newcommand{\modelc}{\textsc{Model-C}}
\newcommand{\modeld}{\textsc{Model-D}}
\definecolor{hlcolor}{RGB}{210,52,73}
\begin{document}

\title{Robust RGB-D Face Recognition Using Attribute-Aware Loss}

\author{Luo~Jiang,~
        Juyong~Zhang$^\dagger$,~\IEEEmembership{Member,~IEEE,}
        and~Bailin~Deng,~\IEEEmembership{Member,~IEEE}
\IEEEcompsocitemizethanks{\IEEEcompsocthanksitem L. Jiang and J. Zhang are with School of Mathematical Sciences, University of Science and Technology of China. The research work was done when Luo Jiang did his internship at Beijing Dilusense Technology Corporation. 
\IEEEcompsocthanksitem B. Deng is with School of Computer Science and Informatics, Cardiff University.}
\thanks{$^\dagger$Corresponding author. Email: {\texttt{juyong@ustc.edu.cn}}.}
}

\markboth{~}
{Jiang \MakeLowercase{\textit{et al.}}: Robust RGB-D Face Recognition Using Attribute-Aware Loss}

\IEEEtitleabstractindextext{%
\begin{abstract}
Existing convolutional neural network (CNN) based face recognition algorithms typically learn a discriminative feature mapping, using a loss function that enforces separation of features from different classes and/or aggregation of features within the same class. However, they may suffer from bias in the training data such as uneven sampling density, because they optimize the adjacency relationship of the learned features without considering the proximity of the underlying faces. Moreover, since they only use facial images for training, the learned feature mapping may not correctly indicate the relationship of other attributes such as gender and ethnicity, which can be important for some face recognition applications. In this paper, we propose a new CNN-based face recognition approach that incorporates such attributes into the training process. Using an attribute-aware loss function that regularizes the feature mapping using attribute proximity, our approach learns more discriminative features that are correlated with the attributes. We train our face recognition model on a large-scale RGB-D data set with over 100K identities captured under real application conditions. By comparing our approach with other methods on a variety of experiments, we demonstrate that depth channel and attribute-aware loss greatly improve the accuracy and robustness of face recognition.
\end{abstract}

\begin{IEEEkeywords}
Face Recognition, RGB-D images, uneven sampling density, attribute-aware loss.
\end{IEEEkeywords}}

\maketitle
\IEEEdisplaynontitleabstractindextext
\IEEEpeerreviewmaketitle

\IEEEraisesectionheading{\section{Introduction}\label{sec:introduction}}
\IEEEPARstart{C}{onvolutional} neural networks (CNNs) play a significant role in face analysis tasks, such as landmarks detection~\cite{sun2013deep,zhang2016joint}, face recognition~\cite{huang2014labeled, wolf2011face, kemelmacher2016megaface} and 3D face reconstruction~\cite{richardson2016learning,Guo20183DFace}. With the emergence of large public face data sets~\cite{yi2014learning} and sophisticated networks~\cite{szegedy2015going, he2016deep}, the problem of face recognition has gained lots of attention and developed rapidly. At present, some mainstream methods already outperform humans on certain benchmark datasets such as~\cite{huang2014labeled}. These methods usually map faces to discriminative feature vectors in a high-dimensional Euclidean space, to determine whether a pair of faces belong to the same category. For example, deep metric learning methods (such as contrastive loss~\cite{hadsell2006dimensionality} or triplet loss~\cite{schroff2015facenet}) usually train a CNN by comparing pairs or triplets of facial images to learn discriminative features. Later, different variants of the softmax loss~\cite{taigman2014deepface, wen2016discriminative, ranjan2017l2, wang2017normface, liu2017sphereface} are used as supervision signals in CNNs to extract discriminative features, which achieve excellent performance under the protocol of small training set. These methods~\cite{schroff2015facenet,wen2016discriminative,liu2017sphereface} utilize CNNs to learn strong discriminative deep features, using loss functions that enforce either intra-class compactness or inter-class dispersion.

Although the above two categories of methods have achieved remarkable performance, they still have their own limitations. First, the contrastive loss and triplet loss suffer from slow convergence due to the construction of a large number of pairs or triplets. To accelerate convergence,~\cite{sohn2016improved} proposed a $(N+1)$-tuple loss that increases the number of negative examples. However, this loss still requires complex recombination of training samples. In comparison, the softmax loss and its variants have no such requirement on the training data and converge more quickly. The center loss~\cite{wen2016discriminative} is the first to add soft constraints on deep features in the softmax loss to minimize the intra-class variations, significantly improving the performance of softmax loss. Afterward, the angular softmax loss~\cite{liu2017sphereface} imposed discriminative constraints on a hypersphere manifold, which further improved the performance of softmax loss. However, by enforcing intra-class aggregation and inter-class separation among the training data, existing variants of softmax loss encourage the uniform distribution of feature vectors for the training data, even though the training data may not be sampled uniformly. As a result, the proximity between the learned feature vectors for two test data may not correctly indicate the proximity between their underlying faces, which can affect the accuracy of face recognition algorithms based on feature proximity. To address this issue, we propose an attribute-aware loss function that regularizes the learned feature mapping using other attributes such as gender, ethnicity, and age. The proposed loss function imposes a global linear relation between the feature difference and the attribute difference between nearby training data, such that feature vectors for facial data with similar attributes are driven towards each other. In addition, as these attributes are correlated with facial geometry and appearance, the attribute-aware loss also implicitly regularizes the feature proximity with respect to the facial proximity, which helps to account for potential sampling bias in the training set.  

In addition, although existing RGB image-based face recognition methods have achieved great success, they rely solely on the appearance information and may suffer from poor lighting conditions such as dark environments. On the other hand, the depth image captured by RGB-D sensors such as PrimeSense sensors provides additional geometric information that is independent of illumination, which can help to improve the robustness of recognition. To this end, we develop a CNN-based RGB-D face recognition approach, by first aligning the depth map with the RGB image grid and normalizing the depth values to the same range as the RGB values, and then feeding the resulting RGB-D values into CNNs for training and testing. Unlike existing RGB-D based deep learning approaches~\cite{LeeCTL16, HCSC18} that only use small training data sets with less than 1K identities, we train our model on a large RGB-D data set with over 100K identities, where the resulting model achieves more robust performance than RGB based approaches.  

Combining the RGB-D approach with the attribute-aware loss function, our new method greatly improves the robustness and accuracy of facial recognition. We tested our method on several datasets, with different identities in diverse facial expressions and lighting conditions. Our method performs consistently better than state-of-the-art approaches that only rely on RGB information and do not consider additional attributes.

To summarize, this paper makes the following major contributions:
\begin{itemize}
\item We propose an attribute-aware loss function for CNN-based face recognition, which regularizes the distribution of learned feature vectors with respect to additional attributes and improves the accuracy of recognition results. To the best of our knowledge, this is the first method that utilizes non-facial attributes to improve CNN-based face recognition feature training. 
\item For neural network training and testing, we construct a large-scale RGB-D face dataset including more than 100k identities mainly with the frontal pose, and a relatively small RGB-D dataset with 952 identities with various poses. This is the first result that verifies the effectiveness of CNN-based RGB-D face recognition with large training data sets.
\end{itemize}

\section{Related Work}

Face recognition is a classical research topic in pattern recognition and computer vision, with applications in many areas like biometrics, surveillance system, and information security. For a comprehensive review of 2D face recognition and 3D face recognition methods,
one may refer to~\cite{ParkhiVZ15, ABATE20071885}. This section briefly reviews those techniques that are closely related to our work.

\subsection{Deep Learning based Face Recognition}

In the past few years, deep learning based face recognition is one of the most active research areas. In this part, we mainly discuss the loss functions used in these methods.

\textbf{Metric Learning.}
Metric learning~\cite{xing2002distance, weinberger2009distance, Wang2011kernel} attempts to optimize a parametric notion of distance in a fully/weakly/semi-supervised way such that the similar objects are nearby and dissimilar objects are far apart on a target space. In~\cite{xing2002distance}, the learning is done by finding a Mahalanobis distance with a matrix parameter when given some similar pairs of samples. In order to handle more challenging problems, kernel tricks~\cite{Wang2011kernel,jain2012metric} had been introduced in metric learning to extract nonlinear embeddings. In recent years, more discriminative features can be learned with advanced network architectures that minimize some loss functions based on Euclidean distance, such as contrastive loss~\cite{hadsell2006dimensionality} and triplet loss~\cite{schroff2015facenet}. Moreover, these loss functions can be improved by allowing joint comparison among more than one negative example~\cite{sohn2016improved} or minimizing the overall classification error~\cite{kumar2016learning}.

\textbf{Classification Losses.}
The most commonly used classification loss is the softmax loss that maps images to deep features and then to predicted labels. Krizhevsky et al.~\cite{krizhevsky2012imagenet} first observed that CNNs trained with softmax loss can produce discriminative feature vectors, which has also been confirmed by other works~\cite{sharif2014cnn}. However, softmax loss mainly encourages inter-class dispersion, and thus cannot induce strong discriminative features. To enhance the discrimination power of deep features, Wen et al.~\cite{wen2016discriminative} proposed center loss to enforce intra-class aggregation as well as inter-class dispersion. Meanwhile, Ranjan et al.~\cite{ranjan2017l2} observed that the softmax loss is biased to the sample distribution, i.e., fitting well to high-quality faces but ignoring the low-quality faces. Adding $\ell_2$-constraints on features to the softmax loss can make the resulting features as discriminative as those trained with center loss. Afterward, Liu and colleagues~\cite{liu2016large, liu2017sphereface} further improved the features by incorporating an angular margin instead of the Euclidean margin into softmax loss to enhance the inter-class margin and compressing the intra-class angular distribution simultaneously.

\subsection{Face Recognition with Attributes}

Besides the feature vectors extracted from CNN, other attributes can also be utilized in face recognition tasks. An early study~\cite{KumarBBN09} trained 65 ``attribute'' SVM classifiers to recognize the traits of input facial images such as gender, age, race, and hair color, which are then fused with other features for face recognition. In the context of deep learning, attribute-enhanced face recognition does not gain too much attention. One related work~\cite{SamangoueiC16} is to exploit CNN based attribute features for authentication on mobile devices, and the facial attributes are trained by a multi-task, partly based Deep Convolutional Neural Network architecture. Hu et.al~\cite{hu2017attribute} systematically study the problem of how to fuse face recognition features and facial attribute features to enhance face recognition performance. They reformulate feature fusion as a gated two-stream neural network, which can be efficiently optimized by neural network learning.

Based on the assumption that attributes like gender, age and pose could share low-level features from the representation learning perspective, some studies investigate multi-task learning~\cite{rudd2016moon, RanjanSCC17} and show that such attributes could help the face recognition task. In our method, different from the above attribute fusion and multi-task learning methods, the attributes are directly used to guide the face recognition feature learning in the training stage, and they are not needed during the testing stage.

\subsection{RGB-D Face Recognition}
In recent years, RGB-D based face recognition has attracted increasing attention because of its robustness in an unconstrained environment. Hsu et al.~\cite{HsuLPW14} considered a scenario in which the gallery is a pair of RGB-D images while the probe is a single RGB image captured by a regular camera without the depth channel. They proposed an approach that reconstructs a 3D face from an RGB-D image for each subject in the gallery, aligns the reconstructed 3D model to a probe using facial landmarks, and recognizes the probe using sparse representation based classification. Zhang et al.~\cite{HCSC18} further considered the problem of multi-modality matching (e.g., RGB-D probe vs. RGB-D gallery) and cross-modality matching (e.g., RGB probe vs. RGB-D) in the same framework. They proposed an approach for RGB-D face recognition that is able to learn complementary features from multiple modalities and common features between different modalities. For the RGB-D vs. RGB-D problem, Goswami et al.~\cite{GoswamiVS14} proposed to compute an RGB-D image descriptor based on entropy and based on the entropy and saliency, as well as geometric facial attributes from the depth map; then the descriptor and the attributes are fused to perform recognition. Li et al.~\cite{LiXMLK16} proposed a multi-channel weighted sparse coding method on the hand-crafted features for RGB-D face recognition. 

Although it is straightforward to extend deep learning based face recognition methods from RGB images to RGB-D images, currently there are no large-scale public RGB-D data sets that can be used for training, which limits the practical applications of these approaches. For example, the model proposed in~\cite{HCSC18} is trained on a dataset with less than 1K identities. To handle this problem, Lee et al.~\cite{LeeCTL16} proposed to first train the deep network with a color face dataset, and then fine-tune it on depth face images for transfer learning.  

\section{Method}
\subsection{Revisiting the Variants of Softmax Loss}
Given a training data set $\{\bfx_i\}_{i=1}^{N}$ with $\bfx_i \in \mathbb{R}^{m\times n}$, and their corresponding labels $\{y_i\}_{i=1}^{N}$ with $y_i \in \mathcal{I}\triangleq \{1,...,C\}$, the following classical softmax loss function is widely used in face recognition tasks
\begin{equation}
	L_s = -\sum_{i=1}^{N} \log (\frac{\exp(\bfw_{y_i}^{T}f(\bfx_i) + b_{y_i})}{\sum_{j=1}^{C} \exp(\bfw_{j}^{T}f(\bfx_i) + b_j)}),
	\label{eq:softmax_loss}
\end{equation}
where $f(\cdot):\mathbb{R}^{m\times n} \rightarrow \mathbb{R}^{K \times 1}$ is the learned feature mapping by training CNNs, $K$ is the dimension of deep feature $f(\bfx_i)$. $\bfW=[\bfw_1,...,\bfw_C]\in \mathbb{R}^{K \times C}$ and $\bfb=[b_1, ..., b_C] \in \mathbb{R}^{1 \times C}$ are the weights and biases in the last fully connected layer, and $\{\bfx_i\}$ can be color or depth images of faces. We denote $\bff_i = f(\bfx_i)$ for simplicity. Typically, during the test phase the mapping $f(\cdot)$ is applied on an image pair ${\left(\bfx_i, \bfx_j\right)}$ to extract two deep features $(\bff_i, \bff_j)$, and the Euclidean distance or cosine distance between the features are computed to determine the similarity between the image pair. Separable features can be learned using softmax loss, but they are not discriminative enough for face recognition.

To learn more discriminative features, several variants of softmax loss have been developed by enlarging the inter-class margin and reducing the intra-class variation. Among them, the center loss~\cite{wen2016discriminative} requires the deep features of each class to gather towards their respective centers $\{\bfc_i\}_{i=1}^{N}$:
\begin{equation}
L_c = \frac{1}{2}\sum_i \|\bff_i-\bfc_{y_i}\|_2^2.
\end{equation}
With the angular softmax loss~\cite{liu2017sphereface}, deep features of each class are compressed using the angular margin $\tau$ instead of the Euclidean margin:
\begin{equation}
L_{\tau} = -\sum_{i=1}^{N} \log \frac{\exp({\|\bff_i\|\cos(\tau\theta_{y_i,i})})}{\exp({\|\bff_i\|\cos(\tau\theta_{y_i,i})}) + \sum\limits_{j \neq y_i} \exp({\|\bff_i\| \cos(\theta_{j,i})})},
\label{eq:AngularSoftMax}
\end{equation}
where $\theta_{j,i}$ is the angle between vectors $\mathbf{w}_j$ and $\mathbf{f}_i$.
There are other variants of softmax loss~\cite{wang2018additive, deng2018arcface} with a similar form as~\eqref{eq:AngularSoftMax}, where the margin and the angle are added instead of being multiplied.

\begin{figure*}[t!] 
	\centering
	\includegraphics[width=0.98\textwidth]{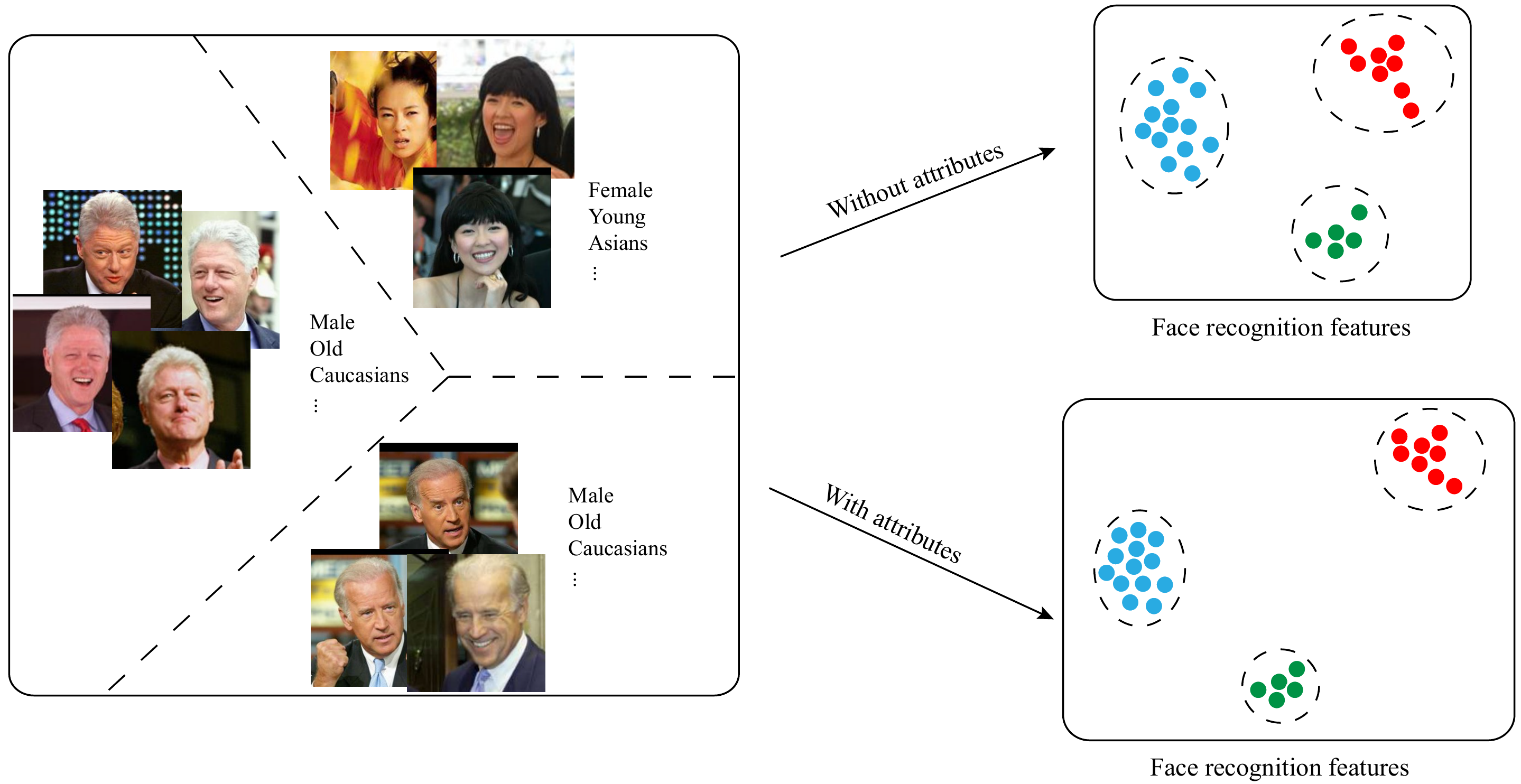}
	\caption{The face recognition features trained with only the softmax loss tend to be evenly distributed for the training data, which can become less effective when the training data are not evenly sampled from the face space. By augmenting the loss function with the attribute-aware loss term, the learned features have similar distribution as the attributes, which are better suited for recognition.}
	\label{fig:method}
\end{figure*}

\subsection{The Attribute-Aware Loss}
To achieve high accuracy for face recognition, it is desirable that the proximity between feature clusters of different classes is consistent with the proximity between the classes (i.e., the underlying faces). Ideally, the more dissimilar two faces are, the further apart their corresponding feature clusters should be. However, this is not guaranteed by the above variants of softmax loss according to our experimental observations (see Fig.~\ref{fig:toy_exp} and Fig.~\ref{fig:gender}). Since they minimize the intra-class variations and maximize the inter-class margins on the training data, the learned feature mappings tend to produce evenly distributed feature vectors for the training faces. On the other hand,
there is no guarantee that the facial images in the training set are evenly distributed in the full face space. As a result, when there exist large variations of sampling density in the training data set, the learned feature mapping may not correctly indicate the proximity of the underlying faces. To address this issue, we can try to introduce a loss function term that regularizes feature proximity with respect to face proximity. However, this is a challenging task as a facial image only reveals the underlying face shape from a certain view direction and can be affected by various factors such as lighting condition and sensor noises. As a result, it is difficult to reliably compute the proximity between two faces by only comparing their scanned images.

Besides the proximity of face shapes, it is also desirable that the learned feature mappings are related to the proximity between other attributes such as gender, ethnicity, and age. For example, if we compare a probe image against a database of facial images to identify $Q$ most likely matches via feature proximity, then it is preferable that all returned images are from persons with the same or similar attributes. The above variants of softmax loss cannot ensure this property either, as they only consider the facial images during the training process.

Motivated by these observations, we propose an attribute-aware loss term that regularizes feature proximity with respect to attribute proximity. Besides the label information, other attributes of the facial images like gender, ethnicity, and age are also given in the training data set. These attributes can be collected during training data construction, and they are independent of the imaging process. We represent the augmented attributes for a facial image $\mathbf{x}_i$ using a vector $\bfp_i \in \mathbb{R}^{H \times 1}$. Then our attribute-aware loss is formulated as
\begin{equation}
	L_a = \frac{1}{2} \sum_{\substack{y_i < y_j \\ d(\bfp_i, \bfp_j) < \tau}} \|(\bff_i-\bff_j) - \bfG (\bfp_i - \bfp_j)\|_2^2,
	\label{eq:AA_loss}
\end{equation}
where $\bfG \in \mathbb{R}^{K \times H}$ is a parameter matrix to be trained, $d(\bfp_i, \bfp_j)=\|\bfp_i-\bfp_j\|_2$ is the Euclidean distance between the two attribute vectors, and $\tau$ is a user-specified threshold. 

Intuitively, this loss term can drive feature clusters with similar attributes towards each other, via a global linear mapping $\bfG$ that relates the feature difference to the attribute difference. To the best of our knowledge, this is the first work in face recognition that optimizes adjacency of learned features using attribute proximity. As shown in Fig.~\ref{fig:method}, the learned feature clusters with similar attributes become closer after our adjacency optimization.
From another perspective, regularization using additional attributes can help the network pick up other useful cues for face recognition, because attributes such as gender, ethnicity, and age are highly correlated with facial shape and appearance. For example, there can be a notable difference between the facial appearance of two persons with different genders. Therefore, the attribute-aware loss can improve the learned feature mapping by implicitly utilizing the appearance variation related to these attributes.

\begin{figure*}[t!]	
	\subfigure[Softmax loss]{
	\begin{minipage}[t]{0.5\linewidth}
	\centering
	\includegraphics[width=0.95\columnwidth]{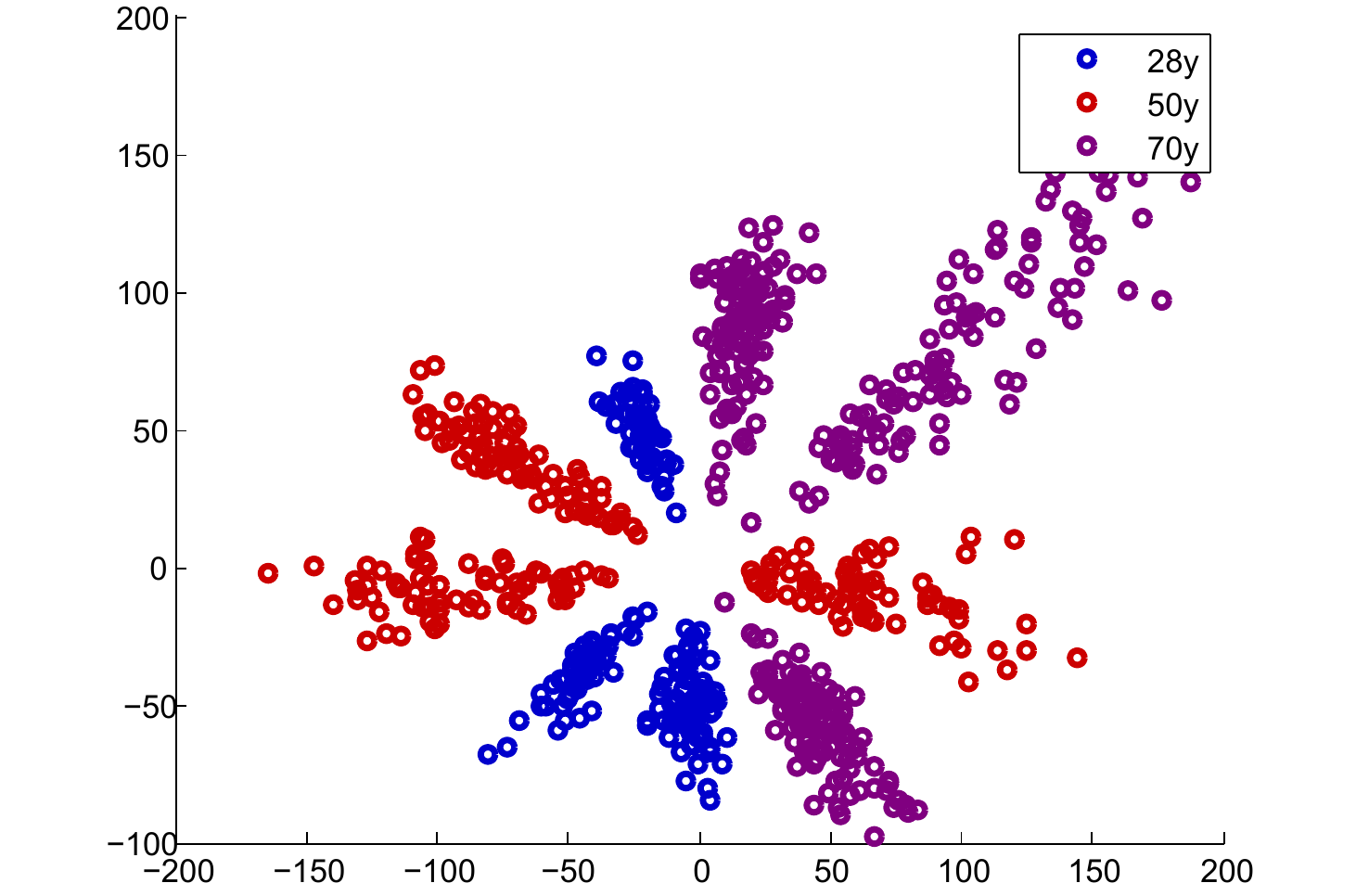}
	\label{fig:side:a}
	\end{minipage}
	}
	\subfigure[Softmax loss + Attribute-aware loss]{
	\begin{minipage}[t]{0.5\linewidth}
	\centering
	\includegraphics[width=0.95\columnwidth]{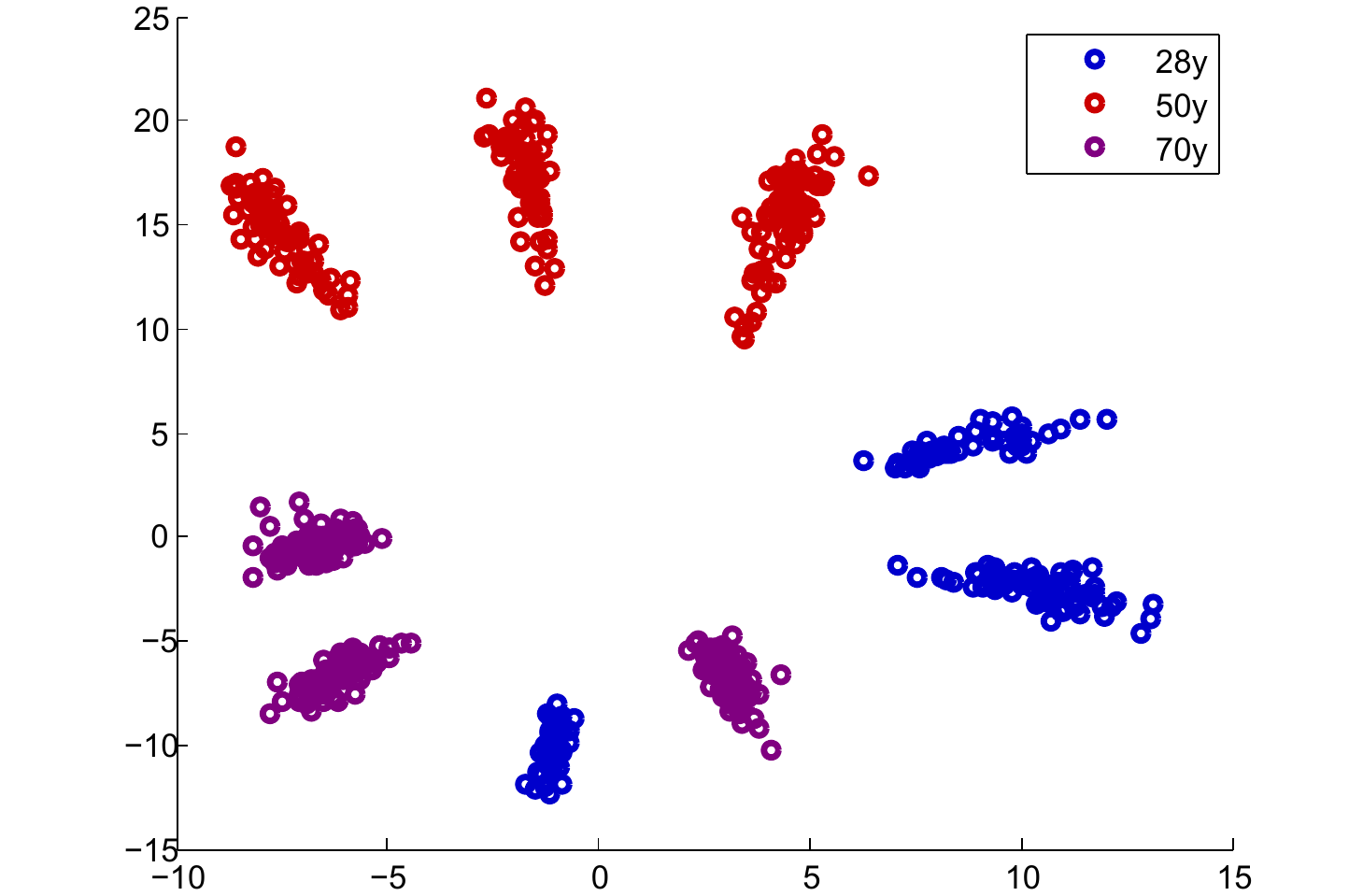}
	\label{fig:side:b}
	\end{minipage}
	}
	\caption{The distribution of deeply learned features under (a) softmax loss and (b) the joint supervision of softmax loss and attribute-aware loss. There are nine identities with three different ages. The points with different colors denote features with different ages.}
	\label{fig:toy_exp}
\end{figure*}

To better understand the attribute-aware loss, the following proposition clarifies how the difference between $\bfp_i$ and $\bfp_j$ influences the difference between $\bff_i$ and $\bff_j$ in the feature space.
\begin{prop}
\label{prop}
Let $\{\mathbf{f}_i\}_{i=0}^{N-1}$ be facial features extracted from training samples $\{\mathbf{x}_i\}_{i=0}^{N-1}$ with corresponding labels $\{y_i\}_{i=0}^{N-1}$ and attribute vectors $\{\mathbf{p}_i\}_{i=0}^{N-1}$. Assume that: 
\begin{enumerate}[label=(\roman*)]
	\item The number of training epochs is large enough to allow all attribute pairs $(\mathbf{p}_i, \mathbf{p}_j)$ that satisfy constrains $y_i \neq y_j$ and $d(\mathbf{p}_i, \mathbf{p}_j) = \|\mathbf{p}_i - \mathbf{p}_j\|_2 < \tau$ to appear sufficiently many times in training phase;
	\item For every identity, there exists at least another identity so that they can meet the above constraints;
	\item The training is convergent, the parameter matrix $\mathbf{G}$ is nonsingular, and $\|\mathbf{G}\|_2 \leq M$ where $M$ is a constant.
	\item The value
	\begin{equation}
	\epsilon \triangleq \max\limits_{\substack{y_i < y_j\\d(\mathbf{p}_i, \mathbf{p}_j)<\tau, \\ i,j \in \{0,\cdots,N-1\}}} \|(\mathbf{f}_i-\mathbf{f}_j) - \mathbf{G} (\mathbf{p}_i - \mathbf{p}_j)\|_2
	\label{eq:MaxEpsilon}
	\end{equation}
	is bounded.
\end{enumerate}
Let $\mathcal{S}_{k}$ denote the set of all facial features of the $k$-th identity. Then 
\begin{enumerate}[label=(\roman*)]
	\item For each $k$, the Euclidean distance between any pair in $\mathcal{S}_{k}$ has an upper bound which is linear with threshold $\tau$;
	\item If the Euclidean distance between the attribute vectors of the $k_1$-th identity and the $k_2$-th identity is smaller than $\tau$, then the Euclidean distance between their average features $\overline{\mathbf{f}}^{k_1} = (\sum_{\mathbf{f}_i \in \mathcal{S}_{k_1}} \mathbf{f}_i)/|\mathcal{S}_{k_1}|$ and 
	$\overline{\mathbf{f}}^{k_2} = (\sum_{\mathbf{f}_j \in \mathcal{S}_{k_2}} \mathbf{f}_j) / |\mathcal{S}_{k_2}|$ also has an upper bound which is linear with threshold $\tau$.
\end{enumerate}
\end{prop}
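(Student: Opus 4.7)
My proof will hinge on a single workhorse inequality that the hypotheses provide for free: by the definition of $\epsilon$ in assumption (iv), every pair of samples $(i,j)$ with $y_i \neq y_j$ and $d(\bfp_i,\bfp_j)<\tau$ satisfies
\[
\|(\bff_i-\bff_j)-\bfG(\bfp_i-\bfp_j)\|_2 \le \epsilon.
\]
I will also use the fact, implicit in the statement of part (ii), that the attribute vector depends only on identity, so $\bfp_i=\bfp_j$ whenever $y_i=y_j$; write $\bfp^{(k)}$ for the common attribute of the $k$-th identity. Together with $\|\bfG\|_2\le M$ from (iii), the rest of the argument reduces to two short applications of the triangle inequality.

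\textbf{Within-identity bound (part i).} To bound $\|\bff_a-\bff_b\|_2$ for $\bff_a,\bff_b\in\mathcal{S}_k$, I will invoke assumption (ii) to find a companion identity $k'$ with $\|\bfp^{(k)}-\bfp^{(k')}\|_2<\tau$, then pick any $\bff_c\in\mathcal{S}_{k'}$. Both pairs $(a,c)$ and $(b,c)$ satisfy the workhorse hypotheses with the \emph{same} attribute-difference vector $\bfp^{(k)}-\bfp^{(k')}$. Writing
\[
\bff_a-\bff_b = \bigl[(\bff_a-\bff_c)-\bfG(\bfp^{(k)}-\bfp^{(k')})\bigr]-\bigl[(\bff_b-\bff_c)-\bfG(\bfp^{(k)}-\bfp^{(k')})\bigr],
\]
the $\bfG$-term cancels telescopically and the triangle inequality yields $\|\bff_a-\bff_b\|_2\le 2\epsilon$, a bound of the required affine form $0\cdot\tau+2\epsilon$ in $\tau$.

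\textbf{Cross-identity mean bound (part ii).} When $\|\bfp^{(k_1)}-\bfp^{(k_2)}\|_2<\tau$, every cross pair $(\bff_i,\bff_j)\in\mathcal{S}_{k_1}\times\mathcal{S}_{k_2}$ falls under the workhorse hypotheses with the \emph{identical} attribute-difference vector $\bfp^{(k_1)}-\bfp^{(k_2)}$. I will average that vector-valued inequality over $i\in\mathcal{S}_{k_1}$ and $j\in\mathcal{S}_{k_2}$: convexity of the norm (equivalently, an iterated triangle inequality) produces $\|(\overline{\bff}^{k_1}-\overline{\bff}^{k_2})-\bfG(\bfp^{(k_1)}-\bfp^{(k_2)})\|_2\le\epsilon$. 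A final triangle inequality combined with $\|\bfG\|_2\le M$ then gives $\|\overline{\bff}^{k_1}-\overline{\bff}^{k_2}\|_2 \le \epsilon + M\tau$, which is linear in $\tau$ as claimed.

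\textbf{Main difficulty.} There is no deep technical obstacle; the statement is essentially a direct consequence of how $\epsilon$ is defined in (iv). The two points requiring care are (a) making explicit that attributes within an identity coincide, so that the telescoping cancellation in part (i) actually occurs, and (b) noting that assumptions (i) and (iii) together justify treating the $\epsilon$-bound as a uniform bound over \emph{every} eligible pair at convergence, rather than only over those sampled in any particular mini-batch. Once those are granted, the proof is a matter of a well-chosen triangle inequality in each case.
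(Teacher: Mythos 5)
Your proposal is correct, and it reaches the two claimed bounds by a genuinely different decomposition than the paper. The paper first derives the pairwise bound $d(\bff_i,\bff_j) \le \|\bfG\|_2\|\bfp_i-\bfp_j\|_2 + \epsilon < M\tau + \epsilon$ for every eligible cross-identity pair, and then (i) applies the triangle inequality through the companion feature to get $2(M\tau+\epsilon)$ for intra-class pairs, and (ii) bounds the distance between the class means by the maximum cross-pair distance, again $M\tau+\epsilon$. You instead keep the residual vectors $(\bff_i-\bff_j)-\bfG(\bfp_i-\bfp_j)$ intact before taking norms: in (i) the common term $\bfG(\bfp^{(k)}-\bfp^{(k')})$ cancels exactly, giving the sharper bound $2\epsilon$ (slope $0$ in $\tau$, still of the required affine form), and in (ii) you average the residuals over all cross pairs and invoke convexity of the norm before adding back $\|\bfG\|_2\|\bfp^{(k_1)}-\bfp^{(k_2)}\|_2 \le M\tau$, recovering the same $M\tau+\epsilon$ as the paper. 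The trade-off is the extra assumption you flag yourself: your cancellation in (i) needs the attribute vector to be constant within each identity ($\bfp_i=\bfp_j$ whenever $y_i=y_j$), which is consistent with how part (ii) of the proposition is phrased but is not literally among the listed hypotheses; the paper's argument only needs the companion identity to be within threshold $\tau$ of both members of the pair, so it survives small per-image attribute variation (e.g.\ age recorded at different capture times) at the cost of the looser constant $2(M\tau+\epsilon)$. Also note that your tighter constant somewhat obscures the qualitative message the authors draw from (i) --- that shrinking $\tau$ tightens intra-class compactness --- since in your bound the $\tau$-dependence is hidden inside $\epsilon$ (whose defining maximum ranges over fewer pairs as $\tau$ decreases) rather than appearing explicitly.
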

\begin{proof}
From Eq.~\eqref{eq:MaxEpsilon}, we have
\[
\|(\mathbf{f}_i - \mathbf{f}_j) - \mathbf{G} (\mathbf{p}_i - \mathbf{p}_j)\|_2 \leq \epsilon, ~~ \forall~y_i < y_j,~d(\mathbf{p}_i, \mathbf{p}_j) < \tau,
\]
Thus
\begin{align}
d(\mathbf{f}_i, \mathbf{f}_j) & \leq \|\mathbf{G} (\mathbf{p}_i - \mathbf{p}_j)\|_2 + \epsilon \leq \|\mathbf{G}\|_2 \|\mathbf{p}_i - \mathbf{p}_j\|_2 + \epsilon \nonumber \\
& < M\cdot\tau + \epsilon. \label{eq:FeatureDistBound}
\end{align}
Let $\mathbf{f}_{\alpha}$ and $\mathbf{f}_{\beta}$ be an arbitrary pair of features in $\mathcal{S}_{k}$. Then from assumption (\rnum{2}), we can find another feature $\mathbf{f}_{\gamma} \in \mathcal{S}_{k'}$ where $k' \neq k$, such that their corresponding attributes $\mathbf{p}_{\alpha}, \mathbf{p}_{\beta}, \mathbf{p}_{\gamma}$ satisfy $d(\mathbf{p}_{\alpha}, \mathbf{p}_{\gamma}) < \tau$ and  $d(\mathbf{p}_{\beta}, \mathbf{p}_{\gamma}) < \tau$.
Applying Eq.~\ref{eq:FeatureDistBound} to the pairs $(\mathbf{p}_{\alpha}, \mathbf{p}_{\gamma})$ and $(\mathbf{p}_{\beta}, \mathbf{p}_{\gamma})$, we obtain
\begin{equation}
	d(\mathbf{f}_{\alpha}, \mathbf{f}_{\beta}) \leq d(\mathbf{f}_{\alpha}, \mathbf{f}_{\gamma}) + d(\mathbf{f}_{\beta}, \mathbf{f}_{\gamma})
	< 2 (M \cdot \tau + \epsilon).
	\label{eq:IntraClassDistBound}
\end{equation}
According to assumptions (\rnum{3}) and (\rnum{4}), $M$ is a constant and $\epsilon$ is bounded. Therefore, Eq.~\eqref{eq:IntraClassDistBound} provides an upper bound for the Euclidean distance between any pair in $\mathcal{S}_{k}$ which is linear with $\tau$.

For the average features $\overline{\mathbf{f}}^{k_1}$ and $\overline{\mathbf{f}}^{k_2}$ of $\mathcal{S}_{k_1}$ and $\mathcal{S}_{k_2}$, their Euclidean distance satisfies
\begin{equation}
	d(\overline{\mathbf{f}}^{k_1}, \overline{\mathbf{f}}^{k_2}) \leq \max_{\substack{\mathbf{f}_i \in \mathcal{S}_{k_1} \\ \mathbf{f}_j \in \mathcal{S}_{k_2}}} d(\mathbf{f}_i, \mathbf{f}_j).
	\label{eq:AvergeFeatureDist}
\end{equation}
If the Euclidean distance between the attribute vectors of the $k_1$-th identity and the $k_2$-th identity is smaller than $\tau$, then by definition $d(\mathbf{p}_i, \mathbf{p}_j) < \tau$ for any $\mathbf{f}_i \in \mathcal{S}_{k_1}$ and $\mathbf{f}_j \in \mathcal{S}_{k_2}$. This implies $d(\mathbf{f}_i, \mathbf{f}_j) < M \cdot \tau + \epsilon$ according to Eq.~\eqref{eq:FeatureDistBound}. Applying this relation to Eq.~\eqref{eq:AvergeFeatureDist}, we obtain
\[
d(\overline{\mathbf{f}}^{k_1}, \overline{\mathbf{f}}^{k_2}) < M \cdot \tau + \epsilon,
\]
which provides an upper bound that is linear with $\tau$.
\end{proof}

Proposition.~\ref{prop} implies two properties. First, the attribute-aware loss layer can make intra-class features more compact than using the softmax loss only, similar to the center loss~\cite{wen2016discriminative}; the smaller the threshold $\tau$ is, the more compact the intra-class features may become. Sec.~\ref{subsec: param} evaluates the effects from different values of $\tau$. Second, for two identities with similar attributes, their corresponding feature clusters will not be far away from each other. This can be demonstrated by our experiment in Sec.~\ref{subsec:private}.

To showcase the effectiveness of the attribute-aware loss, we present a toy example on a very small RGB face dataset. This dataset is selected from our large-scale RGB-D face dataset presented in Section~\ref{subsec: details}, and contains only nine identities with the same gender and ethnicity but different ages. Three of the identities are aged 28, another three aged 50, and the remaining aged 70. We use ResNet-10~\cite{he2016deep} to train two models, one with softmax loss only, the other with both the softmax loss and the attribute-aware loss. Details of training with the combined softmax and attribute-aware loss are presented in Sec.~\ref{subsec:AttributeAwareLossTraining}. We reduce the output dimension of the penultimate fully connected layer to two, allowing us to directly plot the learned features in Fig.~\ref{fig:side:a} and Fig.~\ref{fig:side:b}. We can see that the coordinates of features in Fig.~\ref{fig:side:a} span a much larger range than those in Fig.~\ref{fig:side:b}. It indicates that the two-dimensional features of each identity become more compact through the regularization using attribute-aware loss. We can also observe that features for identities of the same age in Fig.~\ref{fig:side:b} are closer to each other than in Fig.~\ref{fig:side:a}, verifying the second property of the attribute-aware loss.

\begin{figure*}[t] 
	\centering
	\includegraphics[width=\textwidth]{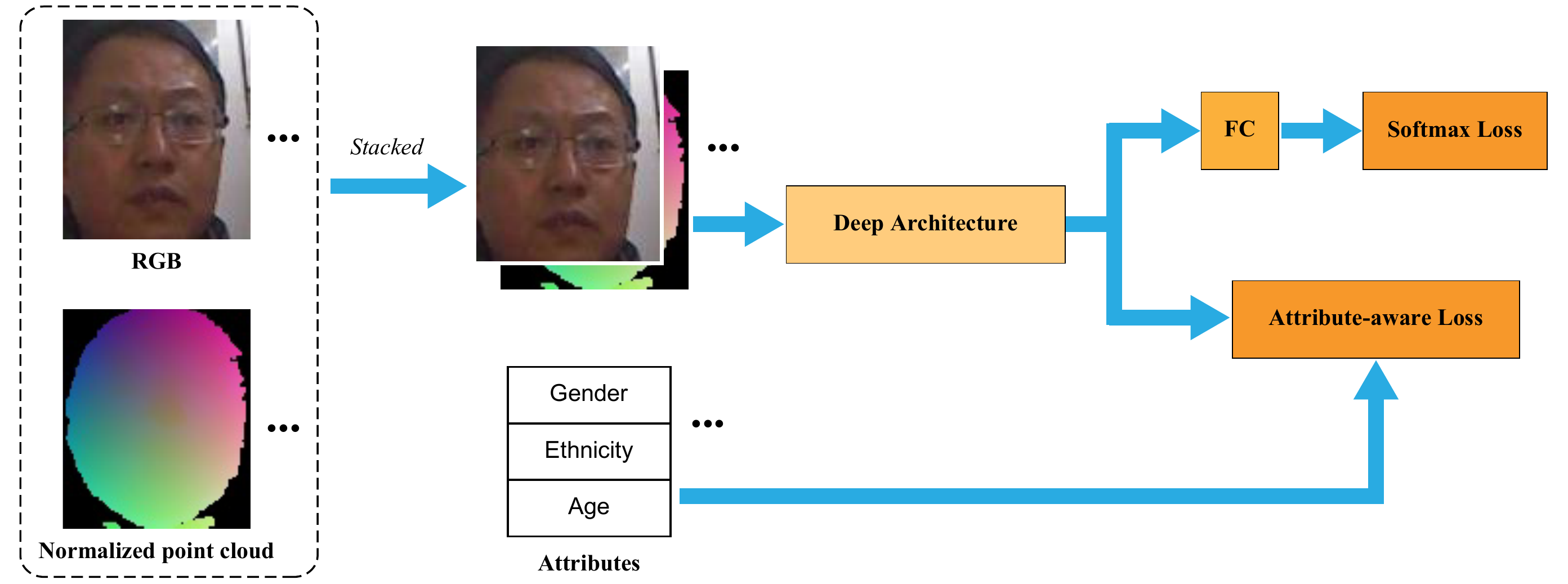}
	\caption{The framework of our approach. Facial images stacked with normalized point clouds and fed into a deep convolutional neutral network. The output features of the CNN are directly fed into a fully connected layer and afterwards the softmax loss layer. They are also fed into the attribute-aware loss layer together with their corresponding attribute vectors.}
	\label{fig:overview}
\end{figure*}

\subsection{Training with the Attribute-Aware Loss }
\label{subsec:AttributeAwareLossTraining}
Similar to~\cite{wen2016discriminative}, our attribute-aware loss in Eq.~\eqref{eq:AA_loss} is an auxiliary supervision signal, which can be combined with any variant of softmax loss. For example, it can be combined with the classical softmax to derive a  loss function
\begin{equation}
	L = L_s + \lambda L_a,
	\label{eq:TotalLoss}
\end{equation}
where $\lambda$ is a user-specified weight to balance the two loss terms.
In the following, we provide the details of mini-batch training with the loss function $L$. Each input mini-batch $\mathcal{B}$ consists of $M$ facial data $\{\bfx_{b_k}\}_{k=1}^M$ where $1 \leq b_k \leq N$, as well as their identity labels $\{y_{b_k}\}_{k=1}^{M}$ and attributes $\{\bfp_{b_k}\}_{k=1}^M$. These data are fed to the CNN, the softmax loss layer, and the attribute-aware loss layer respectively, as illustrated in Fig.~\ref{fig:overview}. 
The addition of an attribute-aware loss layer only introduces a slight overhead for the model size during the training phase, as it contains only one parameter matrix with  $K\times H$ parameters, where $K$ and $H$ are the dimensions of the deep facial feature and the attribute vector, respectively. In our implementation, $K = 512$ and $H = 3$, meaning that we only need 1536 additional parameters. In comparison, the backbone network that extracts deep features, which is a 28-layer ResNet~\cite{he2016deep}, requires about 0.3M parameters. Thus, the overhead from the attribute-aware loss layer is almost negligible. Moreover, during the testing phase, the attribute-aware loss layer is not needed and induces no overhead.

All parameters in the CNN and two loss layers can be learned using standard stochastic gradient descent.
The gradients of $L_a$ with respect to $\bfG$ and $\bff_k$ are computed as:
\begin{align}
\frac{\partial{L_a}}{\partial{\bfG}} = & \sum_{\substack{b_i < b_j \\ d(\bfp_{b_i},\bfp_{b_j})<\tau}} \left[\bfG(\bfp_{b_i} - \bfp_{b_j}) - (\bff_{b_i} - \bff_{b_j})\right]\left(\bfp_{b_i} - \bfp_{b_j}\right)^{T}, \nonumber\\
\frac{\partial{L_a}}{\partial{\bff_{b_k}}} = & \sum_{\substack{j \neq k \\ d(\bfp_{b_k},\bfp_{b_j})<\tau}} \left[(\bff_{b_k} - \bff_{b_j}) - \bfG(\bfp_{b_k} - \bfp_{b_j})\right].
\end{align}
Algorithm~\ref{alg: joint_supervision} summarizes the learning details in the CNNs with joint supervision.
\begin{algorithm}
\caption{The jointly supervised learning algorithm.}
\label{alg: joint_supervision}
\begin{algorithmic}[1]
\Require Training facial data $\{\bfx_i\}$ with identity labels $\{y_i\}$ and attributes $\{\bfp_i\}$; initial parameters $\Theta$ for the CNN; parameters $\bfW$, $\bfb$ for the softmax loss layer and $\bfG$ for the attribute-aware loss layer; balancing weight $\lambda$; learning rate $\alpha$; iteration count $t = 0$. 
\Ensure Trained parameters $\Theta$ for the CNN.
\While {not converge}
\State Compute forward by $L^t = L_s^t + L_a^t$.
\State Randomly choose a mini-batch $\mathcal{B}_t$.
\For {sample $i \in \mathcal{B}_t$}
\State Compute backward by $\displaystyle\frac{\partial{L}^{t}}{\partial{\bff}_{i}^{t}} = \displaystyle\frac{\partial{L}_{s}^{t}}{\partial{\bff}_{i}^{t}} + \lambda\frac{\partial{L}_{a}^{t}}{\partial{\bff}_{i}^{t}}$
\EndFor
\State Update $\mathbf{W}$:~~ $\bfW^{t+1} = \bfW^{t} - \displaystyle\frac{\alpha^{t}}{M}\displaystyle\frac{\partial{L}_{s}^{t}}{\partial{\bfW}^{t}}$.\vspace*{0.5em}
\State Update $\bfb$:~~  $\bfb^{t+1} = \bfb^{t} - \displaystyle\frac{\alpha^{t}}{M}\displaystyle\frac{\partial{L}_{s}^{t}}{\partial{\bfb}^{t}}$.\vspace*{0.5em}
\State Update $\bfG$:~~  $\bfG^{t+1} = \bfG^{t} - \displaystyle\frac{\alpha^{t}}{M}\displaystyle\frac{\partial{L}_{a}^{t}}{\partial{\bfG}^{t}}$.\vspace*{0.5em}
\State Update $\Theta$:~~ $\Theta^{t+1} = \Theta^{t} - \displaystyle\frac{\alpha^{t}}{M}\sum_{i=1}^{M}\displaystyle\frac{\partial{L}^{t}}{\partial{\bff}_{i}^{t}}\cdot\frac{\partial{\bff}_{i}^{t}}{\partial{\Theta}^{t}}$.\vspace*{0.2em}
\State Increment iteration count $t$.
\EndWhile
\end{algorithmic}
\end{algorithm}

\begin{figure*}[t] 
	\centering
	\includegraphics[width=\textwidth]{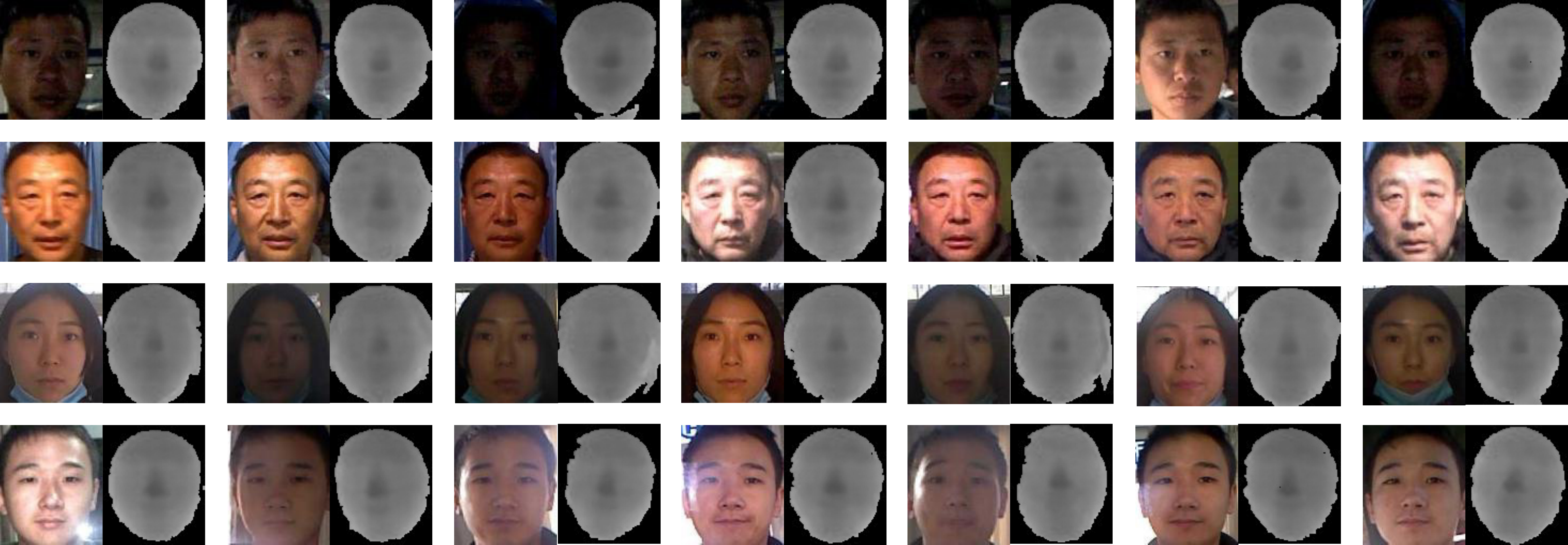}
	\caption{Samples from our large-scale RGB-D dataset. In each row, we show RGB-D data of one individual captured by PrimeSense camera at different locations and times and under different lighting conditions.}
	\label{fig:private_data}
\end{figure*}

\subsection{Training with RGB-D facial data}
In this paper, we use RGB-D facial images as the training data, to improve robustness to illumination conditions compared with RGB facial images. The RGB-D data are collected using low-cost sensors such as PrimeSense.
Using the RGB part of a facial image, we first detect the face region and five landmarks (the eyes, the nose, and two corners of the mouth) using MTCNN~\cite{zhang2016joint}. The face is then cropped to $112 \times 96$ by  similarity transformation, and each RGB color component is normalized from the range $[0, 255]$ into $[-1, 1]$. Afterward, we extract a face region from the corresponding depth image by transferring the RGB face region. Similar to~\cite{kim2017deep, zulqarnain2018learning}, we find the nose tip and crop the point cloud in the face region within an empirically set radius of $90$mm. Then we move the center of the cropped facial scan to $(0, 0, z_{opt})$ and reproject it onto a 2D image plane to generate a new depth map of size $112 \times 96$. The value $z_{opt}$ is chosen to enlarge the projection of facial scans onto the image plane as much as possible. Following~\cite{hernandez2015near}, we compute the depth of each pixel with bilinear interpolation. Using this depth map, we generate a new point cloud under the camera coordinate system. Each point $(v_x, v_y, v_z)$ is further normalized as:
\begin{eqnarray}
\overline{v}_x &=&(2v_x - \maxx - \minx)/(\maxx-\minx), \nonumber \\
\overline{v}_y &=&(2v_y - \maxy-\miny)/(\maxy-\miny),  \\
\overline{v}_z &=& (2v_z - \maxz-\minz)/(\maxz-\minz),\nonumber
\end{eqnarray}
where $(\minx, \miny, \minz)$ and $(\maxx, \maxy, \maxz)$ are the minimum and maximum $x$-, $y$- and $z$-coordinate values among all points, respectively. Augmenting the RGB face region with its normalized point cloud, we obtain a six-channel image with values in $[-1, 1]^6$, which is fed into the deep neutral network. Some RGB facial images and their normalized point clouds are shown in Fig.~\ref{fig:overview}.
\section{Experimental Results}
We conduct extensive experiments to evaluate the effectiveness of our approach. We first test our RGB-D face recognition approach on a large-scale private dataset (Secs.~\ref{subsec: param}, \ref{subsec:private} and \ref{subsec: fusion_scheme}) as well as some public datasets (Sec~\ref{subsec: public}). Then we compare our attribute-aware loss with other methods that utilize attributes for face recognition, using some public RGB datasets (Sec~\ref{subsec: fused}). 

\subsection{Implementation}
\label{subsec: details}
\paraheading{Our RGB-D dataset} 
We construct an RGB-D facial dataset that is captured by PrimeSense camera and contains more than 1.3M RGB-D images of 110K identities, where each identity has at least seven RGB images and their corresponding depth images. Most subjects are captured in the front of the camera with a neutral expression, and the multiple images of each subject are captured at different times and under different lighting conditions. Some samples from this RGB-D facial dataset are shown in Fig.~\ref{fig:private_data}. We also record their attributes including age, gender, and ethnicity. Compared with the datasets used for RGB-D face recognition in previous work~\cite{LeeCTL16, HCSC18}, our dataset contains a much larger number of identities, enabling us to evaluate the effectiveness of our approach in a real-world setting. 

\begin{table}[tbp]
	\renewcommand{\arraystretch}{1.3}
	\caption{The age and gender distribution of the two training data sets we construct, each including 60K individuals.}
	\label{tab:age-gender structure}
  \centering
  \begin{threeparttable}
    \begin{tabular}{ccccc}
    \toprule
    \multirow{2}{*}{Age Group}& \multicolumn{2}{c}{\trainsetone{}}& \multicolumn{2}{c}{\trainsettwo}\cr
	\cmidrule(lr){2-3} \cmidrule(lr){4-5}
	&\#male &\#female &\#male &\#female\cr
    \midrule
    (0,25] &6613 &6613 &4000 &2231\cr
    [26,35] &6613 &6613 &16504 &6245\cr
    [36,45] &6613 &6613 &12558 &4994\cr
    [46,55] &6612 &6612 &6612 &3688\cr
    [56,65] &3530 &1889 &1648 &108\cr
    [66,100]  &1420 &259 &1325 &87\cr
    \bottomrule
    \end{tabular}
    \end{threeparttable}
\end{table}

\paraheading{Implementation details} 
All CNN models are implemented using the Caffe library~\cite{jia2014caffe} with our modifications. Our CNN models are based on the same architecture as~\cite{wen2016discriminative}, using a 28-layer ResNet~\cite{he2016deep}. We train the models using stochastic gradient descent with different loss functions on RGB data, depth data, and their combination, respectively. All CNN models are trained with a batch size of 200 on two GPUs (TITAN XP). The learning rate begins at 0.1, and is divided by ten after 40K and 60K iterations, respectively. The training ends at 70K iterations. The facial data are horizontally flipped for data augmentation. During testing, we extract 512-dimensional deep features from the output of the first fully connected layer. For each test data, we concatenate its 512-dimensional features and its horizontally flipped 512-dimensional features as the final 1024-dimensional representation. In face verification and identification, the similarity between two features is computed using their cosine distance.

\begin{figure*}[t] 
	\centering
	\includegraphics[width=0.8\textwidth]{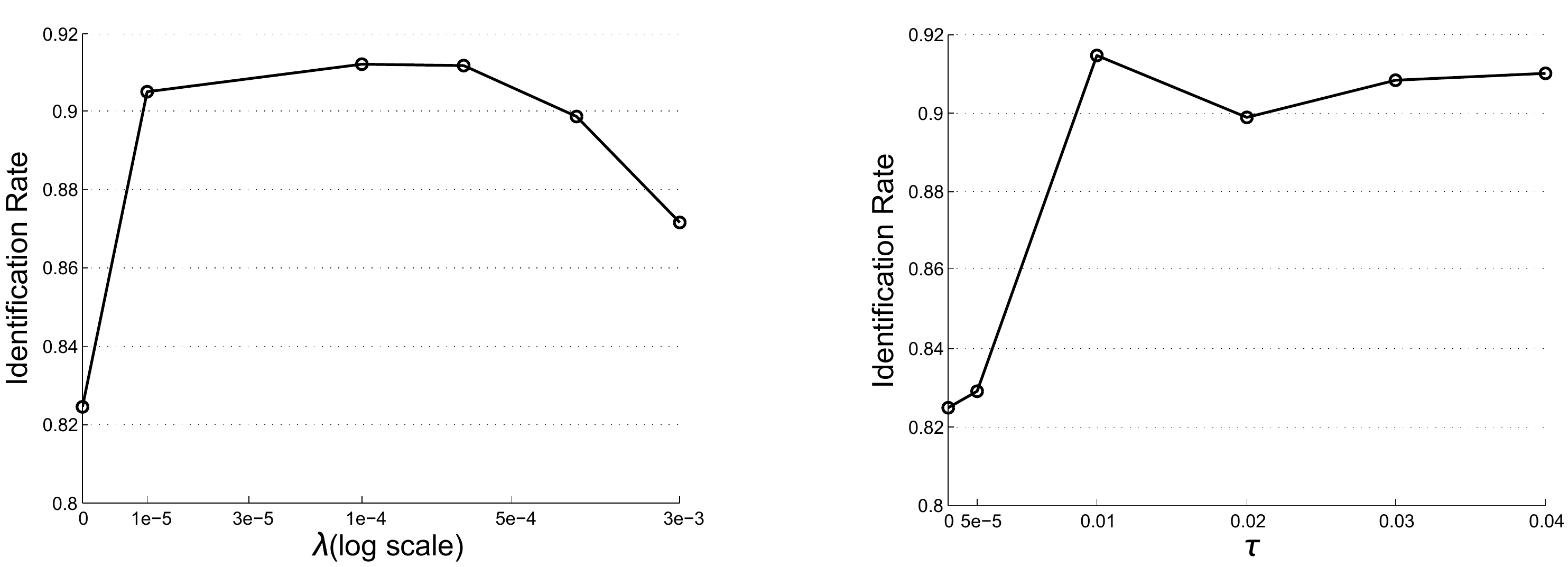}
	\caption{Rank-1 face identification rates of our approach on the RGB part of our constructed test data. Left: results achieved by fixing $\tau=0.02$ in Eq.~\eqref{eq:AA_loss} and varying $\lambda$ in Eq.~\eqref{eq:TotalLoss}. Right: results by fixing $\lambda=0.001$ and varying $\tau$.}
	\label{fig:param}
\end{figure*}

\begin{table*}[t]
	\renewcommand{\arraystretch}{1.3}
	\centering
	\begin{threeparttable}
		\caption{Comparison of face identification rates (\%) using two training datasets with four different loss functions.}
		\label{tab:performance_comparison}
		\begin{tabular}{ccccccc}
			\toprule
			\multirow{2}{*}{Loss functions}&
			\multicolumn{3}{c}{\trainsetone{}}&\multicolumn{3}{c}{\trainsettwo{}}\cr
			\cmidrule(lr){2-4} \cmidrule(lr){5-7}
			&RGB &Depth &RGB + Depth &RGB &Depth &RGB + Depth\cr
			\midrule
			(a) softmax &83.12 &83.83 &88.93 &82.46 &82.67 &86.59\cr
			(b) softmax + attribute-aware &92.24 &90.70 &96.26 &92.00 &90.32 &96.16\cr
			(c) softmax + center &96.56 &88.49 &98.69 &95.89 &87.85 &98.47\cr
			(d) softmax + center + attribute-aware &\textbf{96.78} &\textbf{93.29} &\textbf{98.99} &\textbf{96.28} & \textbf{92.86} & \textbf{98.75}\cr
			\bottomrule
		\end{tabular}
	\end{threeparttable}
\end{table*}

\subsection{Experiments on the Parameters $\lambda$ and $\tau$}
\label{subsec: param}
The parameter $\lambda$ controls the importance of attribute-aware loss $L_a$, while the parameter $\tau$ decides whether a pair of attribute vectors are close enough to be considered in the attribute-aware loss. Since both of them are important for our loss function, we conduct two experiments to illustrate how $\lambda$ and $\tau$ influence the face recognition performance. We first construct a training set and a test set by sampling the whole dataset. This training set (\trainsetone{}) includes about 0.88M RGB-D images of 60K identities, with 91\% Caucasians and 9\% Asians. Within the training set there are balanced distributions of age and gender, as shown in Tab.~\ref{tab:age-gender structure}. The test set includes about 0.22M RGB-D images of 20K identities. The first available neutral image of each identity in the test set is placed in the gallery, and the remaining images are used as probes. We select gender, ethnicity, and age as the attributes for training the model. For the gender attribute, we use 1 to indicate male and -1 for females. For ethnicity, since our dataset only contains Asians and Caucasians, we use 1 for Asians and -1 represents Caucasians. For age, we first truncate the age value at 100, and then linearly map it from the range $[0, 100]$ into $[-1, 1]$. In this way, we represent the attributes as 3-dimensional vector $\bfp_i = (p_i^g, p_i^e, p_i^a)$ in $[-1, 1]^3$, where the superscripts $g$, $e$ and $a$ indicate gender, ethnicity and age, respectively.

To demonstrate the effectiveness and sensitivity of the two parameters, we train our models jointly supervised with the softmax loss and the attribute-aware loss on the RGB part of the constructed dataset. In the first experiment, we fix $\tau=0.02$ and vary $\lambda$ from 0 to 0.003 to learn different models. Performance on the closed-set identification task is the classical evaluation criteria for face recognition. We show the rank-1 identification rates of these models on our test set in Fig.~\ref{fig:param}(left). We can see that our attribute-aware loss can greatly improve the face recognition performance, especially when $\lambda$ is in the range $[10^{-5}, 10^{-3}]$. In the second experiment, we fix $\lambda = 0.001$ and vary $\tau$ from 0 to 0.04. The corresponding rank-1 identification rates on our test set are shown in Fig.~\ref{fig:param}(right). It can be observed that the identification rates remain stable for $\tau \in [0.01, 0.04]$. Within this range, there are between 150 and 630 pairs of similar attribute vectors with different identities in one batch. In practice, we prefer to select a small value for $\tau$ due to its lower computational cost.

\subsection{Experiments on Our RGB-D Dataset}
\label{subsec:private}
\paraheading{Training \& test sets}
To better verify whether the attributes and the depth data can improve the face recognition performance, we construct another training set (\trainsettwo{}) from the whole dataset. This training set also includes about 0.88M RGB-D images of 60K identities, with the same distribution of ethnicity but less balanced distributions of age and gender compared with \trainsetone{} (see Tab.~\ref{tab:age-gender structure}). We use the probe set and gallery set in Sec~\ref{subsec: param} as our test set. In this experiment, we train and test our models on RGB, depth and their combination, respectively.

\paraheading{Training CNNs}
For a fair comparison, we train the CNN model with four loss functions respectively: (a) softmax loss; (b) softmax loss combined with attribute-aware loss; (c) softmax loss combined with center loss~\cite{wen2016discriminative}; (d) softmax loss combined with center loss and attribute-aware loss. We use models (a) and (c) as baselines, to demonstrate the effect of adding attribute-aware loss into the overall loss function. The weights of center loss and attribute-aware loss are set to 0.001 and 0.0001, respectively. The margin $\tau$ used in Eq.~\eqref{eq:AA_loss} is set to 0.01 based on the analysis in Sec.~\ref{subsec: param}. For model (d), we use the center loss to fine-tune the CNN model trained by the attribute-aware loss. For fine-tuning, the learning rate begins at 0.01, and is divided by 10 after 20K and 30K iterations, respectively. The fine-tuning ends at 35K iterations.

\paraheading{Results \& Analysis}
The rank-1 identification rates are shown in Tab.~\ref{tab:performance_comparison}. It shows that training with both RGB and depth can achieve better face recognition performance than with RGB or depth alone. As the depth image captured by the PrimeSense camera is of lower quality than the corresponding RGB image, recognition features trained with depth images are less discriminative than those with RGB images. On the other hand, depth information can be a helpful complement to RGB for improving face recognition accuracy. 

The benefit of incorporating depth information into face recognition is further shown in Fig.~\ref{fig:robust}, where we show some probes that cannot be identified correctly at rank one using the model trained with softmax loss on RGB data, but can be identified correctly when the model is trained with the same loss on RGB-D data. This is due to the very different lighting conditions between the probes and their corresponding sample in the gallery, which makes it difficult to perform face recognition using only RGB data. On the other hand, depth data is more robust to lighting than RGB data, which helps to refine the learned recognition features. Indeed, the use of depth data improves the ranking of the probes and reduces the cosine distance between the features of each probe and its gallery sample.

\begin{figure}[!t] 
	\centering
	\includegraphics[width=\columnwidth]{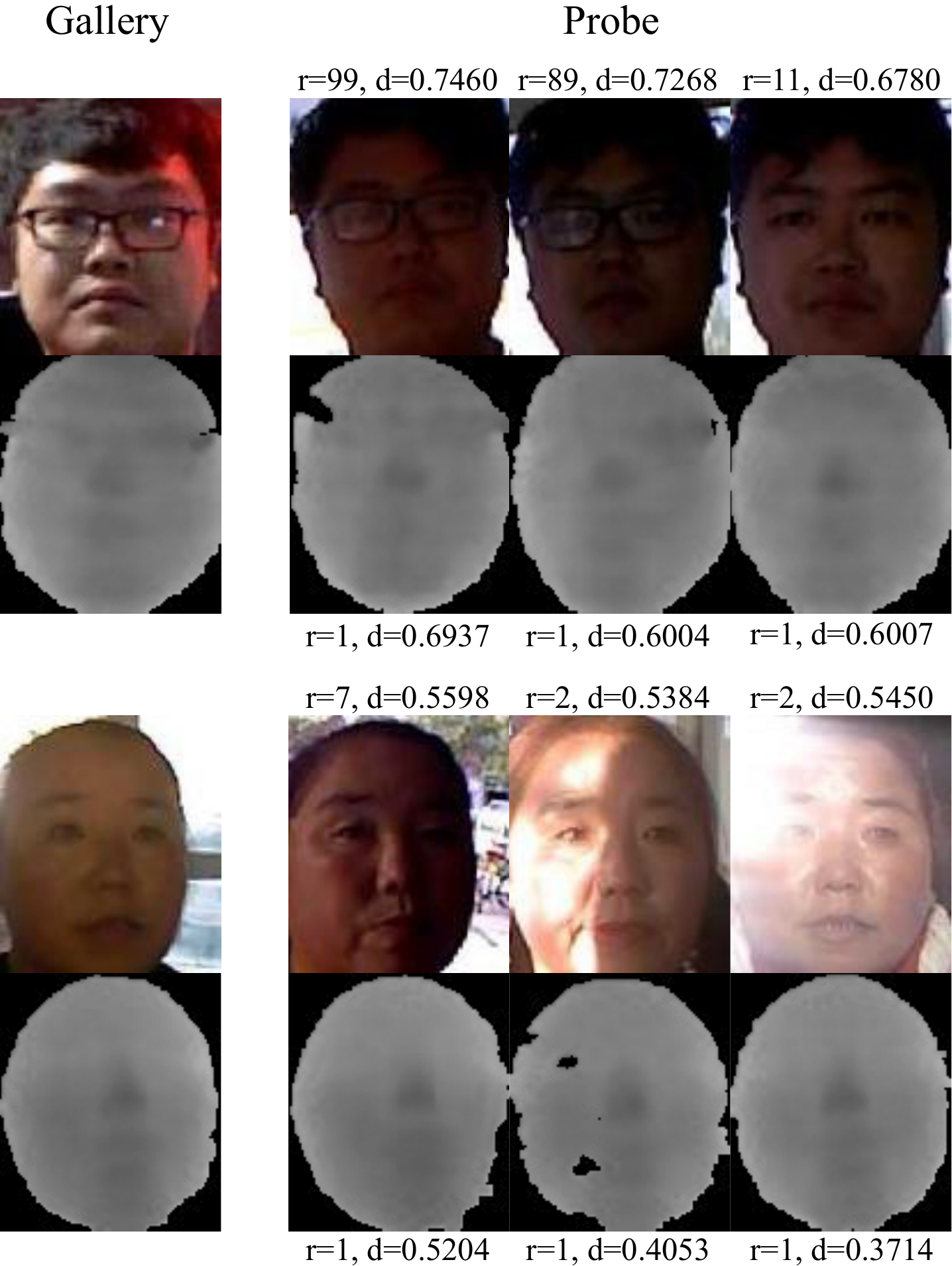}
	\caption{RGB-D data of two individuals in our gallery and their corresponding three RGB-D images in our probe set. These probes cannot be identified correctly at rank one without depth data. On top of each RGB probe, we show its ranking (r) in RGB-only probing and the cosine distance (d) between the features extracted from the RGB probe and its gallery sample. Below each depth probe, we show the ranking in RGB-D probing and the cosine distance between features extracted from the RGB-D probe and its gallery sample. }
	\label{fig:robust}
\end{figure}

\begin{figure*}[t] 
	\centering
	\includegraphics[width=\textwidth]{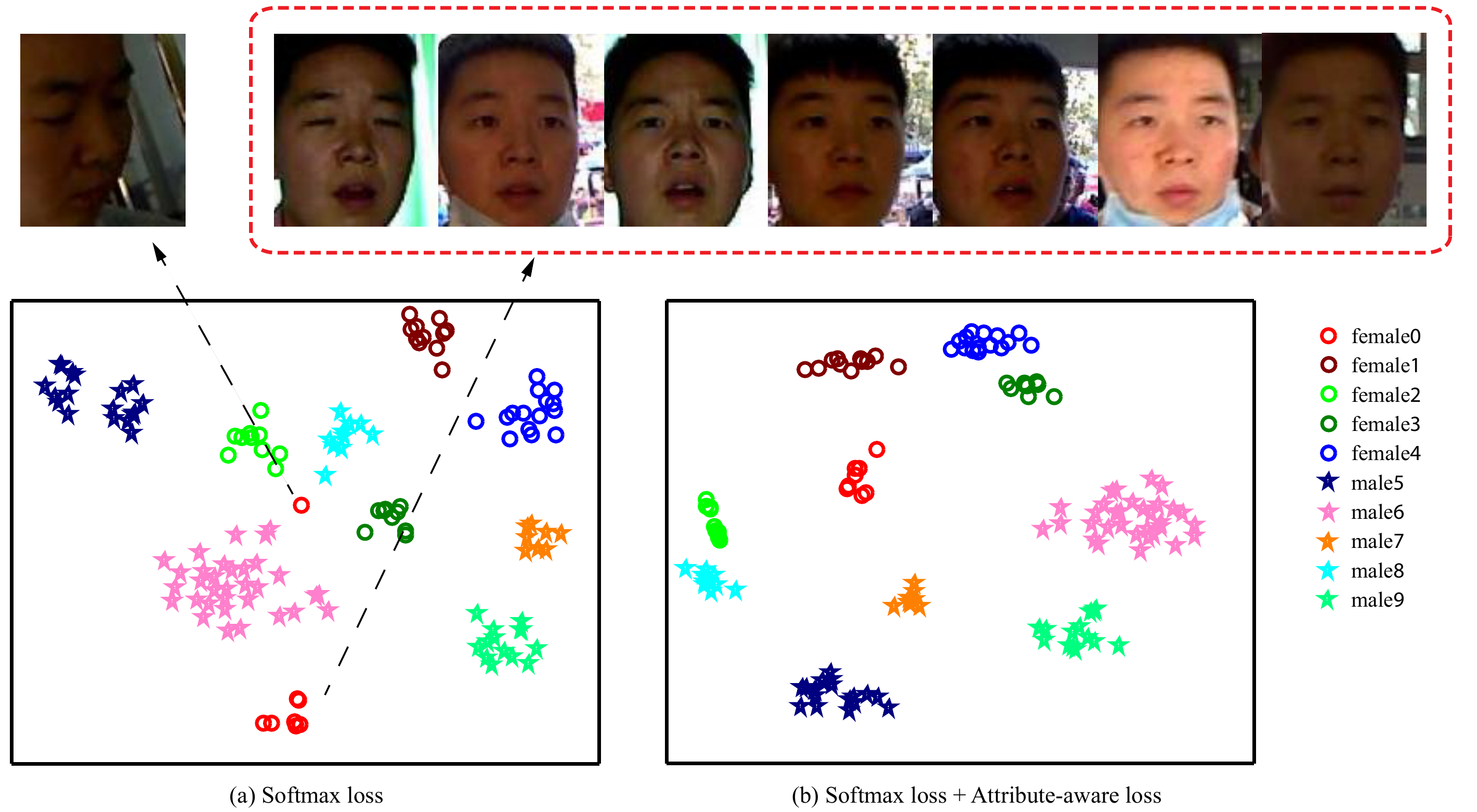}
	\caption{The distribution of learned features supervised by softmax loss (a) and supervised jointly by softmax loss and attribute-aware loss (b) in two-dimensional space after dimension reduction using t-SNE~\cite{Maaten08vd}. There are ten feature clusters with different color, five males (pentacle) and five females (circle). All of color images of the first female (red circle) are shown under the learned features.} 
	\label{fig:gender}
\end{figure*}

In Fig.~\ref{fig:gender}, we show ten clusters of features trained on RGB data with softmax loss (see Fig.~\ref{fig:gender}(a)) and with the combined softmax and attribute-aware loss (see Fig.~\ref{fig:gender}(b)), respectively. The results are visualized in 2D using dimensionality reduction. Without the attribute-aware loss, the five male feature clusters are interlaced with the female feature clusters. In comparison, with an attribute-aware loss there is a more clear separation between male and female feature clusters, and the features within each cluster are closer to each other. In particular, there is one photo of a female subject with large head pose, which looks significantly different from other photos of the same subject. 
Without attribute-aware loss, this photo is mapped to a feature far away from the feature cluster of the subject (shown as red circles) and becomes an outlier. Using attribute-aware loss, all photo of this female subject is mapped to a tight cluster. 

Tab.~\ref{tab:performance_comparison} also shows that the addition of attribute-aware loss into the overall loss function results in more accurate face identification results compared with the two baseline models without attribute-aware loss.
Especially, the model with combined softmax and attribute-aware loss outperforms the model with only softmax loss by a significant margin. 
Moreover, by comparing the results on \trainsetone{} and \trainsettwo{}, we can observe that the attribute-aware loss improves the robustness of the results when the training datasets are not evenly sampled, because it uses the difference of attributes as a proxy to take the sampling bias into account.

\begin{table}[tbp]
	\renewcommand{\arraystretch}{1.3}
		\caption{Comparison of face identification rates (\%) using \trainsetone{} and \trainsettwo{} with detected attribute features.}
		\label{tab:detected_attributes}
		\centering
		\begin{tabular}{lcccc}
			\hline
			Training set &RGB &Depth &RGB + Depth\\ \hline      
			\trainsetone{} &87.40 &88.14 &92.84\\ \hline
			\trainsettwo{} &87.04 &89.97 &92.42\\ \hline
		\end{tabular}
\end{table}

\subsection{Experiments on Our RGB-D Dataset with Detected Attribute Features}
\label{subsec: detected_attributes}
Since our RGB-D facial dataset records three non-facial attributes for each subject, we can use this dataset to train a network that detects attribute features from facial images. We also evaluate the effectiveness of our approach with such detected attributes, by replacing the ground-truth attributes with detected attribute features in the attribute-aware loss layer. Following~\cite{hu2017attribute}, we adopt the architecture of Lighten CNN~\cite{xiang2015light} as the attribute detection network. We train this CNN model with a multi-task loss function MOON~\cite{rudd2016moon} on the RGB data of \trainsetone{}. The learning rate begins at 0.01, and is divided by ten after 40K and 60K iterations, respectively. The training ends at 70K iterations. We then extract 256-dimensional deep attribute features from the penultimate fully connected layer of the attribute detection network, to feed into the attribute-aware loss layer. Using these detected attribute features, we retrain our face recognition network jointly supervised with softmax loss and attribute-aware loss on both \trainsetone{} and \trainsettwo{}. The training details are the same as in Sec.~\ref{subsec:private}. The rank-1 identification rates are shown in Tab~\ref{tab:detected_attributes}. We can see that using the attribute-ware loss layer with detected attribute features also improve the face recognition accuracy, although not as much as the ground-truth attributes. It shows that our approach can also be applied in scenarios where ground-truth attributes are not available in the training data.

\subsection{Experiments on Our RGB-D Dataset with Controlled Attributes}
\begin{table}[tbp]
	\renewcommand{\arraystretch}{1.3}
	\centering
	\caption{Comparison of face identification rates (\%) by training on the RGB-D data of \trainsetone{} with three controlled attributes. Here ``a'', ``e'', and ``g'' indicate age, ethnicity, and gender, respectively.}
	\label{tab:individual_attributes}
	\centering
	\begin{tabular}{lcccc}
		\hline
		Loss function &e + a & g + a &g + e\\ \hline      
		softmax + attribute-aware &95.52 &96.05 &94.16\\ \hline
	\end{tabular}
\end{table}

In Sec.~\ref{subsec:private}, we use three attributes in the attribute-ware loss layer to achieve good performance in the test phase. To further understand the importance of each individual attribute, three new kinds of controlled attributes are designed respectively: (a) ethnicity and age $(p_i^e, p_i^a)$; (b) gender and age $(p_i^g, p_i^a)$; (c) gender and ethnicity $(p_i^g, p_i^e)$. We ignore one of the ground-truth attributes, in turn, to observe its corresponding performance change. The dimension of attribute vectors becomes two in the attribute-aware loss layer now.
We train the CNN model with the three kinds of controlled attributes on the RGB-D data of \trainsetone{} respectively. The training details are the same as in Sec.~\ref{subsec:private}. The margin $\tau$ in Eq.~\eqref{eq:AA_loss} is still set to 0.01 for the three cases. Since most identities in \trainsetone{} are Caucasians, case (a) and case (c) can have more selected pairs in every mini-batch than case (b) when training the CNN models. For the sake of fairness, we also require that the number of selected pairs should not exceed a given threshold in every mini-batch. We set this threshold to 350, which makes sure that the three cases can have a similar number of pairs. The rank-1 identification rates are shown in Tab.~\ref{tab:individual_attributes}. We can see the performance of case (c) drops the most, followed by the case (a) and case (b). The result indicates that age is more important than gender in learning strong discriminative features. This may be due to the fact that age is a variable with a larger range of values, which potentially provides more information.

\begin{figure}[!t] 
	\centering
	\includegraphics[width=\columnwidth]{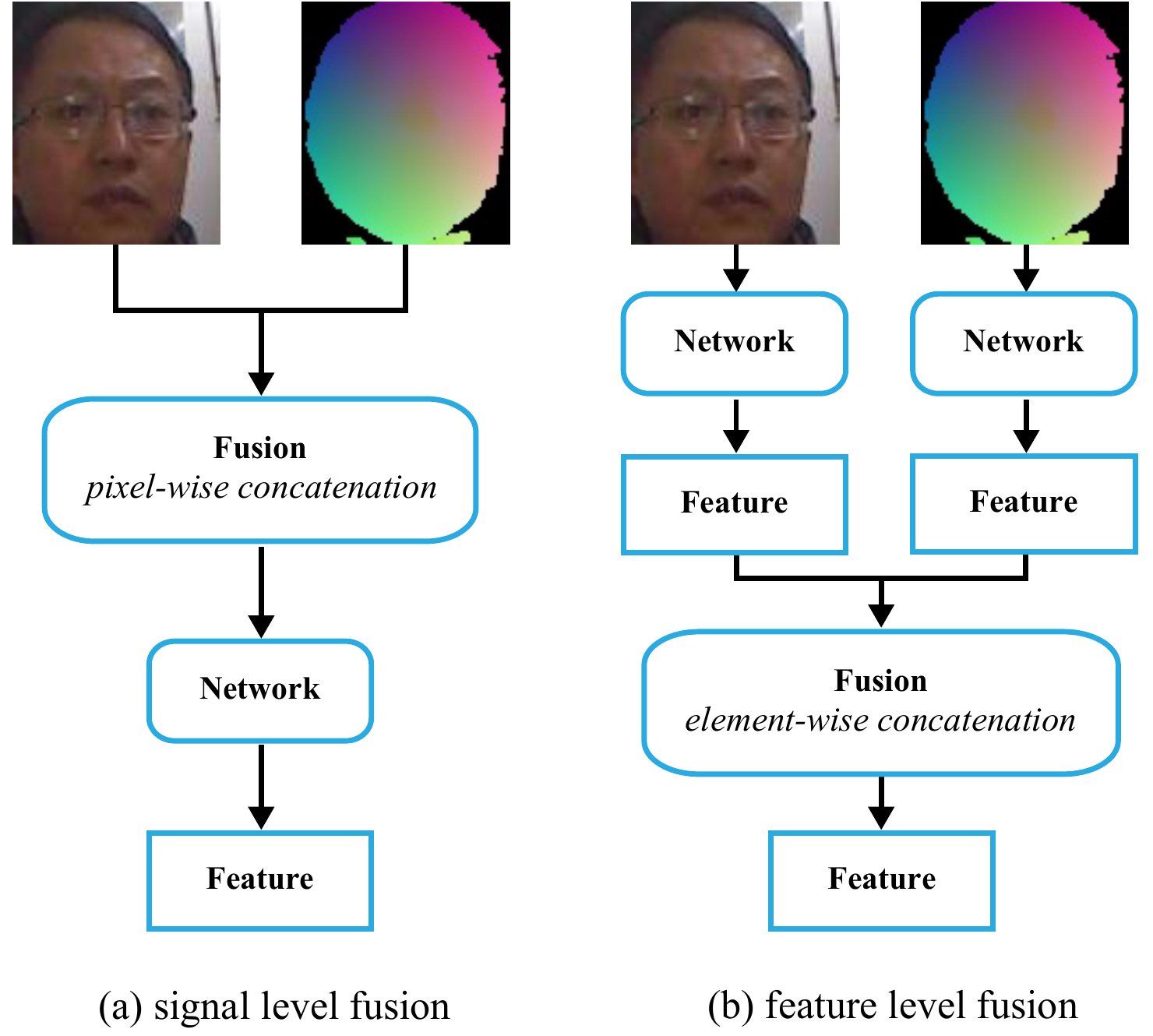}
	\caption{The diagrams of the two representative fusion schemes for RGB-D face recognition: (a) signal level fusion, (b) feature level fusion.}
	\label{fig:fusion}
\end{figure}

\begin{figure*}[t] 
	\centering
	\includegraphics[width=\textwidth]{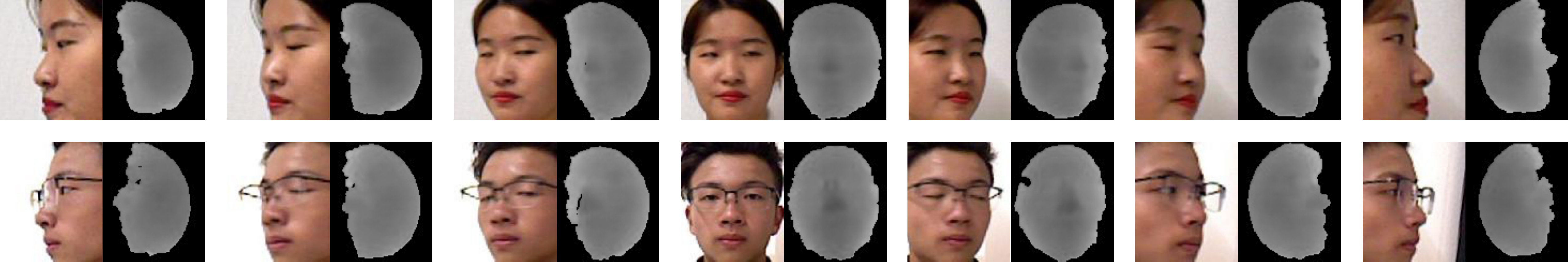}
	\caption{Samples from our small-scale RGB-D dataset under different poses. In each row, we show RGB-D data of one individual captured by PrimeSense under the pose with yaw ranging from left $90^\circ$ to right $90^\circ$.}
	\label{fig:multiview}
\end{figure*}

\subsection{Experiments on RGB-D Fusion Schemes}
\label{subsec: fusion_scheme}
In Sec~\ref{subsec:private}, we concatenate RGB and depth into a six-channel data as input to the 28-layer ResNet, which is a signal-level fusion scheme for RGB-D face recognition (see Fig.~\ref{fig:fusion}(a)). It is worth noting that there are other ways to fuse RGB and depth information. Similar to~\cite{cui2018rgb}, we also explore a feature-level fusion scheme (see Fig.~\ref{fig:fusion}(b)), which fuses the features extracted separately from RGB and depth modality networks and then feeds them into the classification layers. We use the same 28-layer ResNet architecture for both the RGB and depth data and test three different types of the feature-level fusion schemes, which are implemented at the front, the middle and the back of ResNet-28 respectively. More concretely, their fusion operation is performed ahead of the fourth convolution layer, the eighth convolution layer and the first fully connected layer of ResNet-28, respectively. The performance of these feature-level fusion schemes is then compared with the signal-level fusion scheme.
Since the feature-level fusion schemes involve more parameters, we need to set a smaller batch size for training, which in turn requires more training iterations. In our experiments, we set the batch size to 100, and train for 180K iterations. Each feature-level fusion scheme takes about 14 hours to train, whereas the signal-level fusion scheme takes only about 10 hours.
Tab.~\ref{tab:fusion_scheme} compares the performance of different fusion schemes. It can be observed that only the feature-level scheme implemented at the middle outperforms the signal-level scheme by a small margin, although with a 40\% increase of training time. The feature-level scheme implemented at the back performs much worse than all other schemes, especially for the softmax loss. One possible reason is that the features are concatenated using a fully connected layer followed by a loss function, which is not enough for feature fusion. As pointed out in~\cite{cui2018rgb}, the fusion of multiple feature descriptors can improve face recognition rates. However, we need to carefully design fusion schemes to achieve better performance.

\begin{table}[tbp]
	\renewcommand{\arraystretch}{1.3}
  \caption{Comparison of face identification rates (\%) and training time (hours) using the RGB-D data of \trainsetone{} with different loss functions and fusion schemes. Here ``front'', ``mid'' and ``back'' indicate fusion operations ahead of the fourth convolution layer, the eighth convolution layer and the first fully connected layer of ResNet-28~\cite{he2016deep}, respectively.}
  \label{tab:fusion_scheme}
  \centering
  \begin{threeparttable}
    \begin{tabular}{ccccc}
    \toprule
    \multirow{2}{*}{~}&\multirow{2}{*}{signal-level}&
    \multicolumn{3}{c}{feature-level}\cr
    \cmidrule(lr){3-5}
      &  & front &mid &back\cr
    \midrule
    softmax &88.93 &88.67 &89.44 &82.67 \cr
    softmax + attribute-aware &96.26 &95.70 &96.47 &94.90 \cr
    \midrule
    training time &  $\sim$10h & $\sim$14h & $\sim$14h & $\sim$14h \cr
    \bottomrule
    \end{tabular}
    \end{threeparttable}
\end{table}

\begin{table*}[t]
	\renewcommand{\arraystretch}{1.3}
\caption{Comparison of face identification rates (\%) using the RGB-D data of the small-scale training set under different poses.}
\label{tab:pose}
\centering
\begin{tabular}{lcccccc}
\hline
Loss functions &$-90^\circ$ &$(-90^\circ, -45^\circ)$ &$(-45^\circ, +45^\circ)$ &$(+45^\circ, +90^\circ)$ &$+90^\circ$\\ \hline      
softmax &62.65 &88.36 &98.59 &88.38 &50.12 \\ \hline
softmax + attribute-aware &\textbf{81.15} &\textbf{93.97} &\textbf{99.65} &\textbf{92.96} &\textbf{65.88}\\ \hline
\end{tabular}
\end{table*}

\subsection{Experiments on a Small-Scale RGB-D Dataset Under Pose Variation}
\label{subsec: pose_variation}
Most identities of the RGB-D dataset in Sec.~\ref{subsec: details} are with a frontal pose. To demonstrate the robustness of our method for larger poses, we construct another small-scale RGB-D facial dataset that is captured by PrimeSense camera in the same indoor scene and contains about 25k RGB-D images of 952 identities, where each identity also has corresponding attributes including gender, ethnicity, and age. The new dataset contains only young Asians with $30\%$ male and $70\%$ female, and covers pose with yaw ranging from left $90^\circ$ to right $90^\circ$. Some samples from this new dataset are shown in Fig.~\ref{fig:multiview}. This dataset is available at \url{http://staff.ustc.edu.cn/~juyong/RGBD_dataset.html}. We use the first 852 identities to construct a training set and the remaining 100 identities for testing. Similar to~\cite{hu2017attribute}, the first frontal image of each identity is placed in the gallery, with the remaining images used as probes.

We train the CNN model jointly supervised with softmax loss and attribute-aware loss on the RGB-D data of the training set. The architecture of the CNN model is the same as in Sec.~\ref{subsec:private}. The weight of the attribute-aware loss is set to 0.0001 and the distance margin $\tau$ is set to 0.001. The learning rate begins at 0.1, and is divided by ten after 1.1k and 1.7k iterations. The training ends at 2k iterations. The rank-1 identification rates are shown in Tab~\ref{tab:pose}. We can see that the model with the combined softmax and attribute-ware loss outperforms the model with only the softmax loss across the whole range of pose variation, with greater improvements for more extreme poses. This example shows that attribute-aware loss can enhance the robustness of facial features with respect to pose variation.

\subsection{Experiments on Public RGB-D Datasets}
Although our RGB-D face recognition model is trained with data captured by PrimeSense
cameras, it can also be applied to other RGB-D data with good generalization ability. In the following, we test our RGB-D face recognition model trained using \trainsetone{} on some public RGB-D datasets, and compare it with other RGB-D face recognition methods.

\begin{figure*}[t] 
	\centering
	\includegraphics[width=\textwidth]{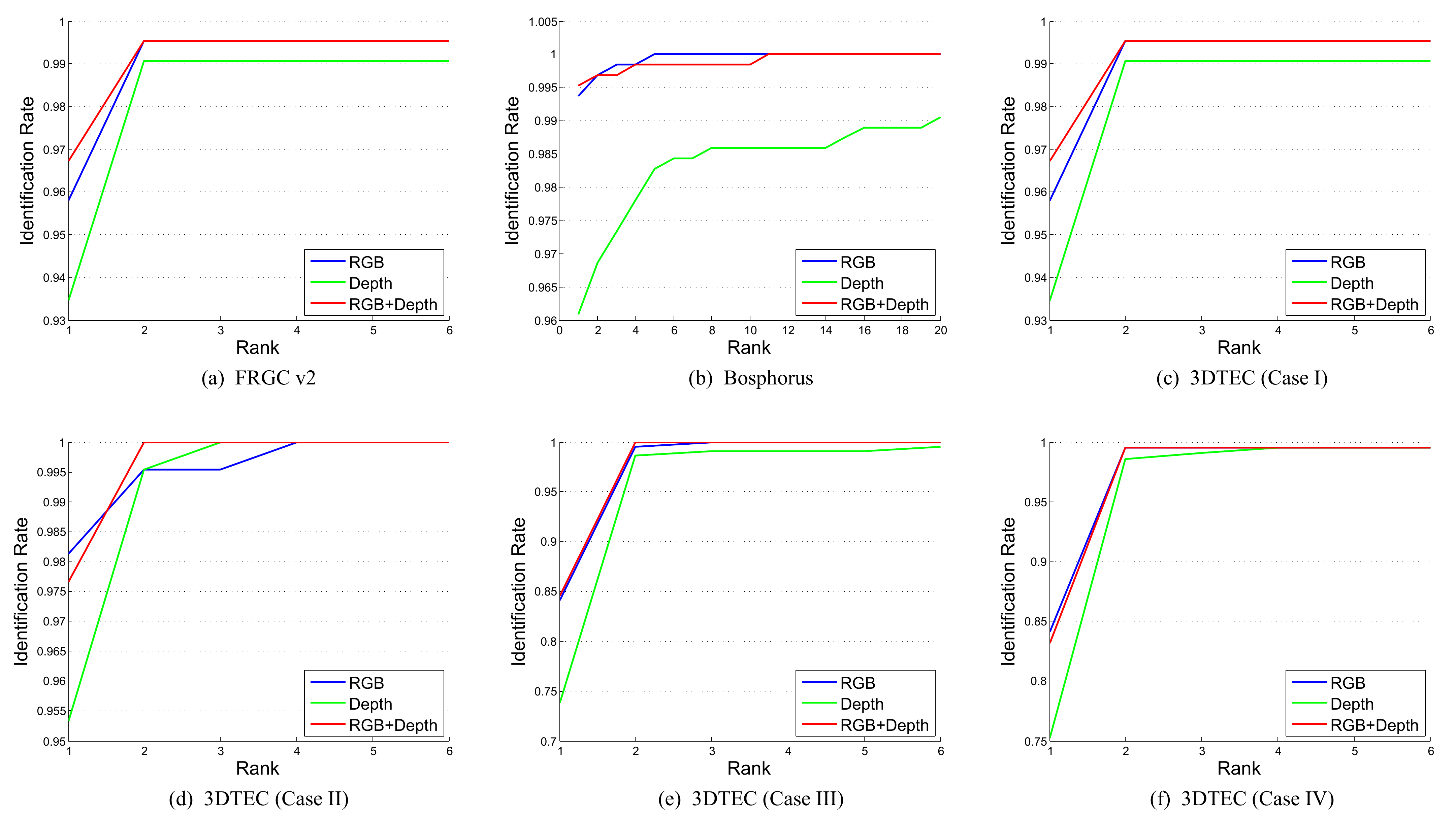}
	\caption{Evaluation of our method on three public data sets: FRGC v2~\cite{phillips2005overview}, Bosphorus~\cite{savran2008bosphorus} and 3D-TEC~\cite{vijayan2011twins}. For each test data set, we show cumulative match characteristic (CMC) curves of three models trained using RGB, depth, and RGB-D of \trainsetone{} respectively.}
	\label{fig:rates}
\end{figure*}

\begin{table*}[t]
	\renewcommand{\arraystretch}{1.3}
	 \caption{Comparison of Rank-1 identification rates (\%) with state-of-the-art methods on public data sets. Except for our method, the result data are taken from the respective papers~\cite{zulqarnain2018learning,kim2017deep,faltemier2008region,li2014expression,GoswamiVS14,LiXMLK16}. The best result for each data set is highlighted in bold font.}
	\label{tab:performance_public_rgbd}
  \centering
  \begin{threeparttable}
    \begin{tabular}{ccccccccc}
    \toprule
    \multirow{2}{*}{Method}& \multicolumn{1}{c}{\multirow{2}{*}{Training data}}& \multicolumn{1}{c}{\multirow{2}{*}{Modality}} &\multicolumn{1}{c}{\multirow{2}{*}{FRGC}}& \multicolumn{1}{c}{\multirow{2}{*}{Bosphorus}}& \multicolumn{4}{c}{3DTEC}\cr
	\cmidrule(lr){6-9}
	&&&&&Case \Rnum{1} &Case \Rnum{2} &Case \Rnum{3} &Case \Rnum{4}\cr
    \midrule
    GoogleNet~\cite{szegedy2015going} &Private~\cite{zulqarnain2018learning} &RGB &21.51 &63.44 &- &- &- &-\cr
    VGG-Face~\cite{ParkhiVZ15}  &Private~\cite{zulqarnain2018learning} &RGB &87.92 &96.39 &- &- &- &-\cr
    Ours      &\trainsetone{} &RGB &95.69 &96.08 &95.79 &98.13 &84.11 &84.11\cr
\hdashline
    Faltemier et al.~\cite{faltemier2008region} &FRGC~\cite{phillips2005overview} &Depth &- &- &94.40 &93.50 &72.40 &72.90\cr
    Li et al.~\cite{li2014expression} &Part of Bosphorus~\cite{savran2008bosphorus} &Depth &96.30 &95.40 &93.90 &96.30 &\textbf{90.70} &91.60\cr
    Kim et al.~\cite{kim2017deep} &FRGC~\cite{phillips2005overview} &Depth &- &99.20 &94.80 &94.80 &81.30 &79.90\cr
    FR3DNet~\cite{zulqarnain2018learning} &Private~\cite{zulqarnain2018learning} &Depth &97.06 &96.18 &- &- &- &-\cr
    Ours &\trainsetone{} &Depth &97.45 &99.37 &93.46 &95.33 &73.83 &75.23\cr
\hdashline
    Goswami et al.~\cite{GoswamiVS14} &None &RGB+Depth &- &- &95.80 &94.30 &90.10 &\textbf{92.40}\cr
    Li et al.~\cite{LiXMLK16} &Part of FRGC~\cite{phillips2005overview} &RGB+Depth &95.20 &99.40 &- &- &- &-\cr
    Ours & \trainsetone{} &RGB+Depth &\textbf{98.52} &\textbf{99.52} &\textbf{96.73} &\textbf{97.66} &84.58 &83.18\cr
    \bottomrule
    \end{tabular}
    \end{threeparttable}
\end{table*}

\paraheading{Performance on FRGCv2}
The FRGCv2 dataset~\cite{phillips2005overview} contains 4,950 scans and texture images of 557 identities captured across multiple sessions and with two kinds of expressions (e.g., neutral and smile). To test face identification performance, we place the first available neutral scan and the texture image of each identity in the gallery, and use the remaining RGB-D images as probes. Tab.~\ref{tab:performance_public_rgbd} reports the rank-1 identification rates of different methods, while Fig.~\ref{fig:rates}(a) shows the cumulative match characteristic (CMC) curves. Our method performs the best under all three data modalities (RGB, Depth, and RGB + Depth).

\paraheading{Performance on Bosphorus}
The Bosphorus dataset~\cite{savran2008bosphorus} contains 4,666 scans including color and depth images from 105 identities with rich expression variations, poses, and occlusions. There are 743 scans that only include expression variations. In our experiment, 105 first neutral depth and color images from each identity are used as the gallery set, and 638 non-neutral scans are used as the probe set. We report the rank-1 identification rate in Tab.~\ref{tab:performance_public_rgbd} and show the CMC curves in Fig.~\ref{fig:rates}(b). Although our training data does not include the public data sets and our trained model is not fine-tuned on any gallery of public test data sets, our model can still achieve the best results on Bosphorus dataset.

\paraheading{Performance on 3D-TEC}
The 3D-TEC dataset~\cite{vijayan2011twins} contains 107 pairs of twins acquired using a Minolta VIVID 910 3D scanner under controlled illumination and background. We use the standard protocol for 3D-TEC (four scenarios: Case I,II,III, and IV) described in~\cite{vijayan2011twins}. We show the rank-1 identification rate in Tab.~\ref{tab:performance_public_rgbd} and the CMC curves in Figs.~\ref{fig:rates}(c)-(f). Cases III and IV are challenging scenarios where the system does not control the expressions of the subject in a gallery set of twins. Since the depth image of the training data for our models is captured by PrimeSense cameras and of low quality, our models do not work well for these scenarios that require a distinction between subtle differences in the depth data. This also explains why our RGB model performs even better than our RGB-D model.
\label{subsec: public}

\subsection{Comparison with Other Methods that Utilize Attributes}
\label{subsec: fused}
To demonstrate the effectiveness of our attribute-aware loss, we also compare our method with GTNN~\cite{hu2017attribute}, a state-of-the-art face recognition method utilizing attributes. GTNN fuses the facial recognition features (FRFs) and facial attribute features (FAFs) together via a nonlinear tensor-based fusion method, and the fused features are used for the final face recognition. Different from GTNN, our method needs attribute information only for the training stage but not the testing stage.

\paraheading{Training \& test sets} 
We train GTNN and our model on publicly available web-collected RGB datasets, including CASIA-Webface~\cite{yi2014learning} and CelebA~\cite{liu2015deep}. CASIA-Webface includes about 494K facial RGB images of 10,575 identities without facial attributes. CelebA is a large-scale facial attributes dataset with more than 200K celebrity RGB images of 10,177 identities, each with 40 attribute annotations. Same as~\cite{hu2017attribute}, we remove attributes that do not characterize a specific person and retain 13 attributes. The results are tested on the LFW dataset~\cite{huang2014labeled}, which contains 13,233 RGB images of 5,749 identities. During testing, LFW is divided into ten predefined subsets for cross-validation. We follow the standard \emph{Unrestricted, Labeled Outside Data Results} protocol for testing.

\paraheading{Training CNNs}
The input for GTNN consists of pre-extracted FRFs and FAFs. We train a CNN model (\modela{}) with softmax loss on CASIA-Webface to extract FRFs. We also train a CNN model (\modelb{}) with the loss function MOON~\cite{rudd2016moon} on CelebA to extract FAFs. Then we use \modela{} and \modelb{} to extract FRFs and FAFs on CASIA-Webface respectively. Therefore, we can train the GTNN (\modelc{}) with the softmax loss on those FRFs and FAFs. In order to compare with GTNN, we regard those FAFs as the attributes and train a CNN model (\modeld{}) with the combined softmax and attribute-aware loss on CASIA-Webface. \modela{}, \modelb{} and \modelc{} are trained according to the details given in~\cite{wen2016discriminative, hu2017attribute, rudd2016moon}. For training \modeld{}, the learning rate begins at 0.1, and is divided by ten after 16K and 24K iterations, respectively. The training ends at 28K iterations. The weight of the attribute-aware loss is 0.001 and the margin $\tau$ is set to 0.58, such that there are about 300 pairs of similar attribute vectors with different identities in one batch.

\paraheading{Results \& Analysis}
The face verification rates of four models on the LFW dataset are reported in Tab.~\ref{tab:fusion}. The GTNN~\cite{hu2017attribute} fusion method improves the verification rate compared with training with FRFs or FAFs alone, while our attribute-aware achieves the best accuracy among the four approaches.
Another thing to note is the greater efficiency of our approach. Training with our attribute-aware loss layer improves the accuracy of the resulting FRFs, with an almost negligible increase in the number of training parameters. In the testing phase, the attribute-aware loss layer is not needed and incurs no overhead. In comparison, GTNN fuses FRFs and FAFs into high-dimensional features via a tensor-based approach that involves a large number of training parameters. Although the number of parameters can be reduced by Tucker decomposition as shown in~\cite{hu2017attribute}, the rank of the tensor must be carefully chosen to achieve a good balance between accuracy and efficiency. Moreover, GTNN requires training the FRF model, the FAF model and the fusion model, whereas our approach only requires training a single FRF model.

\begin{table}[tbp]
\renewcommand{\arraystretch}{1.3}
\caption{Comparison with fusion method GTNN~\cite{hu2017attribute} on LFW data set.}
\label{tab:fusion}
\centering
\begin{tabular}{lccc}
\hline
Models &Descriptions &Acc.(\%)\\ \hline      
\modela{} &FRFs &97.43\\ \hline
\modelb{} &FAFs &73.92\\ \hline
\modelc{} &GTNN~\cite{hu2017attribute} fusion &97.78\\ \hline
\modeld{} &Our method &\textbf{98.42}\\ \hline
\end{tabular}
\end{table}
\section{Discussion \& Conclusion}

We have presented an attribute-aware loss function for CNN-based face recognition, which regularizes the distribution of learned recognition features with respect to additional attributes. The novel attribute-aware loss could help resolve the problem of uneven sampling in the training dataset, and improves the face recognition accuracy. Besides, we train an RGB-D based face recognition model using a large data set with over 100K identities. The experimental results demonstrate the effectiveness of our novel attribute-aware loss, and the good generalization ability of our trained RGB-D face recognition model.

Our work is the first method on using non-facial attributes which are invariant to the capture environments to regularize the face recognition feature mapping. In this work, although we only use gender, age, and ethnicity attribute for regularization, the experimental results still demonstrate big improvements. Other attributes could also be added for further study. Another point that needs to be further investigated is how the transformation formulation between recognition feature and attribute feature influences the final recognition performance. For example, we could consider replacing the linear transformation $\mathbf{G}$ with nonlinear transformation. Currently, we add the attribute-aware loss term to classification loss like softmax. How to combine it with metric learning algorithms like triplet loss used in~\cite{schroff2015facenet} is an interesting research problem.

\section*{Acknowledgments}
This work was supported by National Natural Science Foundation of China (No. 61672481), and Youth Innovation Promotion Association CAS (No. 2018495).

{\small
\bibliographystyle{ieee}
\bibliography{GuidedRecognition}
}

\begin{IEEEbiography}[{\includegraphics[width=1in]{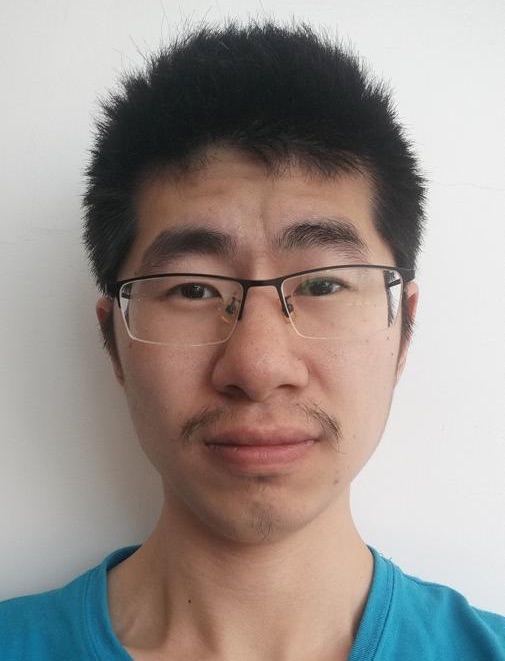}}]{Luo Jiang}
is currently working towards the PhD degree at the University of Science and Technology of China.
He obtained his bachelor degree in 2013 from the Huazhong University of Science and Technology, China. His research interests include computer graphics, image processing and deep learning.
\end{IEEEbiography}

\vspace{-5mm}
\begin{IEEEbiography}[{\includegraphics[width=1in]{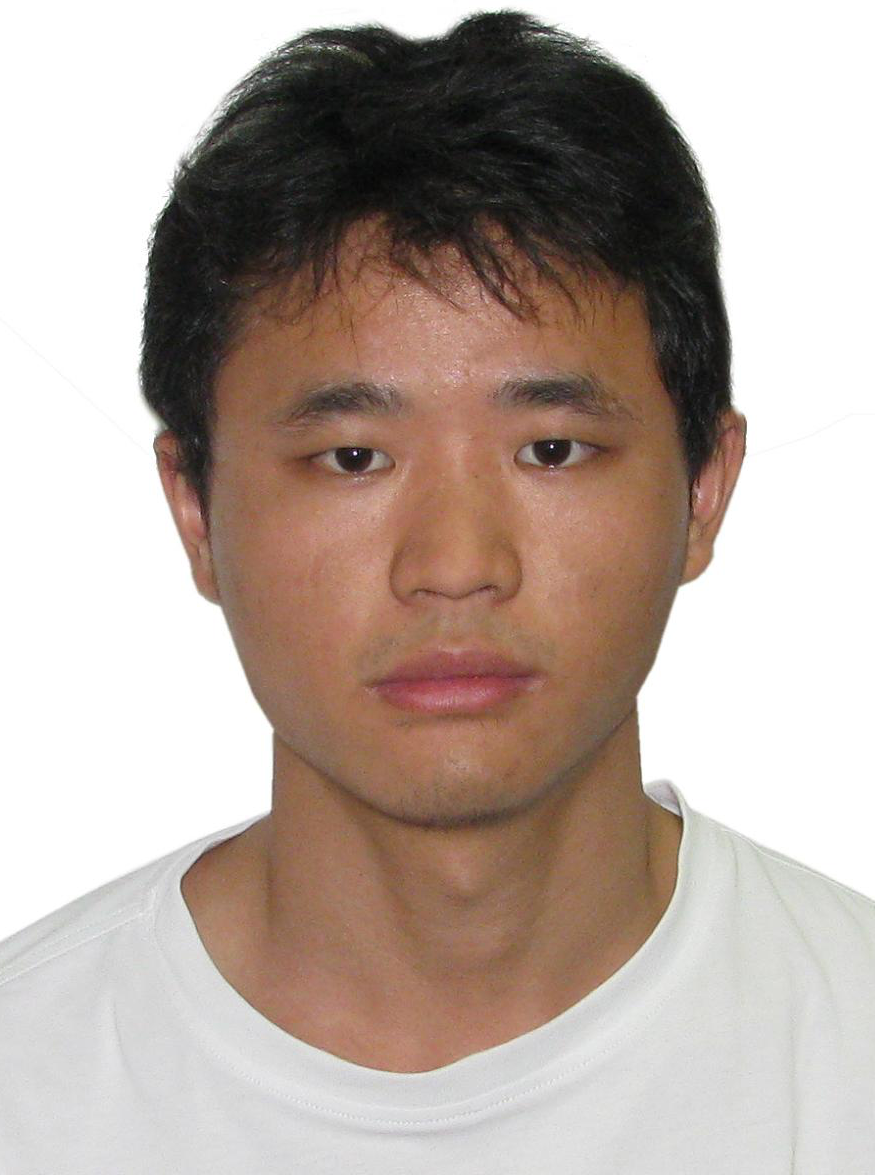}}]{Juyong Zhang}
is an associate professor in the School of Mathematical Sciences at University of Science and Technology of China. He received the BS degree from the University of Science and Technology of China in 2006, and the PhD degree from Nanyang Technological University, Singapore. His research interests include computer graphics, computer vision, and numerical optimization. He is an associate editor of The Visual Computer.
\end{IEEEbiography}

\vspace{-5mm}
\begin{IEEEbiography}[{\includegraphics[width=1in]{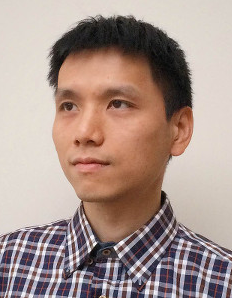}}]{Bailin Deng}
is a lecturer in the School of Computer Science and Informatics at Cardiff University. He received the BEng degree in computer software (2005) and the MSc degree in computer science (2008) from Tsinghua University (China), and the PhD degree in technical mathematics from Vienna University of Technology (Austria). His research interests include geometry processing, numerical optimization, computational design, and digital fabrication. He is a member of the IEEE.
\end{IEEEbiography}

\end{document}